\newcommand{\ie}{\emph{i.e.}} 
\newcommand{\eg}{\emph{e.g.}}
\newcommand\ci{\perp\!\!\!\perp}
\pgfplotsset{compat=newest}
\newcommand{\inx}{\ensuremath{\mathcal{X}}}
\newcommand{\iny}{\ensuremath{\mathcal{Y}}} 
\newcommand{\inz}{\ensuremath{\mathcal{Z}}}  
\newcommand{\inw}{\ensuremath{\mathcal{W}}}
\newcommand{\pp}[1]{\ensuremath{\mathbb{#1}}}     
\newcommand{\hbspf}{\ensuremath{\mathcal{F}}}
\newcommand{\hbspg}{\ensuremath{\mathcal{G}}}
\newcommand{\hbspu}{\ensuremath{\mathcal{U}}}
\newcommand{\rr}{\mathbb{R}} 		         
\newcommand{\ep}{\mathbb{E}}                     
\newcommand{\kmat}{\mathbf{K}}                   
\newcommand{\lmat}{\mathbf{L}}                   
\newcommand{\avec}{\bm{\alpha}}                  
\newcommand{\bvec}{\bm{\beta}}                   
\newcommand{\covw}{\ensuremath{\mathcal{C}_{\mathit{W}}}}
\newcommand{\covxw}{\ensuremath{\mathcal{C}_{\mathit{XW}}}}
\newcommand{\covwx}{\ensuremath{\mathcal{C}_{\mathit{WX}}}}
\newcommand{\covr}{\ensuremath{\mathcal{R}}}
\newcommand{\ecovr}{\ensuremath{\widehat{\mathcal{R}}}}
\newcommand{\covrl}{\ensuremath{\mathcal{R}_{\lambda_1}}}
\newcommand{\ecovrl}{\ensuremath{\widehat{\mathcal{R}}_{\lambda_1}}}
\newcommand{\ecovw}{\ensuremath{\widehat{\mathcal{C}}_{\mathit{W}}}}
\newcommand{\ecovwx}{\ensuremath{\widehat{\mathcal{C}}_{\mathit{WX}}}}
\newcommand{\ecovxw}{\ensuremath{\widehat{\mathcal{C}}_{\mathit{XW}}}}
\newcommand{\covl}{\ensuremath{\mathcal{C}_{\lambda_1}}}
\newcommand{\ecovl}{\ensuremath{\widehat{\mathcal{C}}_{\lambda_1}}}
\newcommand{\bb}{\ensuremath{\mathbf{b}}}
\newcommand{\bh}{\ensuremath{\hat{\mathbf{b}}}}
\newcommand{\Id}{\ensuremath{\mathcal{I}}}
\newcommand{\y}{\ensuremath{\mathbf{y}}}
\newcommand{\M}{\ensuremath{\mathbf{M}}}
\newtheorem{theorem}{Theorem}
\newtheorem{definition}[theorem]{Definition}
\newtheorem{assumption}{Assumption}
\title{Dual Instrumental Variable Regression}
\author{%
  Krikamol Muandet \\
  Max Planck Institute for Intelligent Systems \\
  T\"ubingen, Germany \\
  \url{krikamol@tuebingen.mpg.de} \\
  \And
  Arash Mehrjou \\
  Max Planck Institute for Intelligent Systems \\
  ETH Z\"urich, Z\"urich, Switzerland \\
  \url{arash.mehrjou@inf.ethz.ch} \\
  \AND
  Si Kai Lee \\
  Booth School of Business \\ 
  University of Chicago, USA \\
  \url{sikai.lee@chicagobooth.edu} \\
  \And
  Anant Raj \\
  Max Planck Institute for Intelligent Systems \\
  T\"ubingen, Germany \\
  \url{anant.raj@tuebingen.mpg.de}
}
\begin{document}
\maketitle

\begin{abstract}
We present a novel algorithm for non-linear instrumental variable (IV) regression, DualIV, which simplifies traditional two-stage methods via a dual formulation. 
Inspired by problems in stochastic programming, we show that two-stage procedures for non-linear IV regression can be reformulated as a convex-concave saddle-point problem. 
Our formulation enables us to circumvent the first-stage regression which is a potential bottleneck in real-world applications. 
We develop a simple kernel-based algorithm with an analytic solution based on this formulation. 
Empirical results show that we are competitive to existing, more complicated algorithms for non-linear instrumental variable regression.
\end{abstract}

\section{Introduction}
\label{sec:introduction}

Inferring causal relationships under the influence of unobserved confounders remains one of the most challenging problems in economics, health care, and social sciences \citep{Angrist08:Harmless,Imbens15:CIS}. 
A typical example in economics is the study of returns from schooling \citep{Card99:Labor}, which attempts to measure the causal effect of education on labor market earnings.
For each individual, the treatment variable $X$ represents the level of education and the outcome $Y$ represents how much they earn. 
However, one's level of education and income is likely confounded by the socioeconomic status or other unobserved confounding factors $H$ \citep[Ch. 4]{Angrist08:Harmless}.  


Since randomized control trials are often infeasible in most economic studies, 
economists have turned to \emph{instrumental variables} (IVs) or \emph{instruments} derived from naturally occurring random experiments to overcome unobserved confounding. 
Informally, instrumental variables $Z$ are defined as variables that are associated with the treatment $X$, affect the outcome $Y$ only through $X$ and do not share common causes with $Y$.
For instance, the season-of-birth was used as an instrument in \citep{Angrist91:Schooling} to estimate the impact of compulsory schooling on earnings. 
Because of the compulsory school attendance laws, an individual's season-of-birth, which is likely to be random, affects how long they actually remain in school, but not their earnings.
Figure \ref{fig:dgp} illustrates this example.
Finding valid instruments for specific problems is an essential task in econometrics \citep{Angrist08:Harmless} and  epidemiology \citep{Burgess17:Mendelian}.

Although IV analysis is widely used, the statistical tools employed for estimating causal effect are fairly rudimentary.
Most applications of instrumental variables utilise a two-stage procedure \citep{Angrist08:Harmless,White82:IV,Singh19:KIV,Hartford17:DIV}. 
For instance, the two-stage least squares (2SLS) relies on the assumption that the relationship between $X$ and $Y$ is linear \citep{peters2017elements}.
It first estimates the conditional mean $\mathbb{E}[X|Z=z]$ via linear regression and then regresses $Y$ on the estimate of $\mathbb{E}[X|Z=z]$ to obtain an estimate of the causal effect.
Since the first-stage estimate is by construction independent from confounders, the resultant causal estimate is therefore free from hidden confounding.
In the non-linear setting, however, a poorly-fitted first-stage regression may result in inaccurate second-stage estimates \citep[Ch. 4.6]{Angrist08:Harmless}.

In this paper, we propose a novel procedure, DualIV, to directly estimate the structural causal function.
Unlike previous works which extend 2SLS by employing non-linear models in place of their linear counterparts \citep{Hartford17:DIV,Singh19:KIV}, we solve the dual problem which can be expressed as a convex-concave saddle-point problem. 
Based on this framework, we develop a consistent reproducing kernel Hilbert spaces-based (RKHS) algorithm.
Our formulation was inspired by the mathematical resemblance of non-linear IV to two-stage problems in stochastic programming \citep{Shapiro14:LSP,Dai17:Dual,Hsu19}.

The rest of the paper is organized in the following manner.
Section \ref{sec:preliminaries} introduces the IV regression problem, reviews related work and identifies current limitations.
We present our formulation in Section \ref{sec:dualiv-regression}, followed by the kernelized estimation method in Section \ref{sec:estimation}. 
Then, Section \ref{sec:experiments} reports empirical comparisons between DualIV and existing algorithms.
Finally, we discuss the limitations of our procedure and suggest future directions in Section \ref{sec:discussion}.
All proofs can be found in Appendix \ref{sec:proofs}.

\section{Instrumental variable regression}
\label{sec:preliminaries}

Let $X$, $Y$, and $Z$ be treatment, outcome, and instrumental variable(s) taking values in $\inx$, $\iny$, and $\inz$, respectively.
In this work, we assume that $\iny \in \rr$, and $\inx$ and $\inz$ are Polish spaces.
We also assume that $Y$ is bounded, \ie, $|Y| < M < \infty$ almost surely. 
Moreover, we denote unobserved confounder(s) by $H$. 
The underlying data generating process (DGP) is described by the causal graph in Figure \ref{fig:dgp} equipped with the following structural causal model (SCM):
\begin{equation}\label{eq:sem}
    Y = f(X) + \varepsilon, \quad \mathbb{E}[\varepsilon]=0, 
\end{equation}
\noindent where $f$ is an unknown, potentially non-linear continuous function and $\varepsilon$ denotes the additive noise which depends on the hidden confounder(s) $H$. 
If $\mathbb{E}[\varepsilon \,|\, X]=0$, we can estimate $f$ consistently from observational data via the standard least-square regression.
This allows us to identify $\mathbb{E}[Y \,|\, \text{do}(X=x)]$ where $\text{do}(X=x)$ represents an intervention on $X$ where its value is set to $x$ \citep{Pearl2000}.

For non-expert readers, we elaborate that $\text{do}(X=x)$ here denotes a mathematical operator which simulates physical interventions by setting the value of $X$ to $x$, while keeping the rest of the model unchanged \citep[Sec. 3.2.1]{Pearl09:Causal}. 
That is, the conditional expectation $\mathbb{E}[Y\,|\,\text{do}(X=x)]$ is computed with respect to the interventional distribution $\pp{P}(Y\,|\,\text{do}(X=x))$.
We can estimate $\pp{P}(Y\,|\,\text{do}(X=x))$ if it is possible to directly manipulate $X$ and then observe the resulting outcome $Y$. 
In Figure \ref{fig:dgp}, for instance, one may assign different levels of education to people and then observe their subsequent levels of income in the labor market. 
Unfortunately, such experiment is not always possible and we only have access to an observational distribution $\pp{P}(Y\,|\,X=x)$, which can be different from $\pp{P}(Y\,|\,\text{do}(X=x))$.
The discrepancy between interventional and observational distributions may result from the unobserved socioeconomic status, as illustrated in Figure \ref{fig:dgp}.

\begin{figure}[t!]
  \centering
  \begin{tikzpicture}[scale=0.45]
    \node[draw, circle, very thick] at (10,0) (uu) {$H$};
    \node[draw, circle, very thick, fill=blue!20] at (5.5,0) (y) {$Y$};
    \node[draw, circle, very thick, fill=blue!20] at (1,0) (x) {$X$};
    \node[draw, circle, very thick, fill=blue!20] at (-4,0) (z) {$Z$};
    \draw[->, very thick] (x) to (y);
    \draw[->, very thick, dashed] (uu) to [out=-150,in=-30] (x);
    \draw[->, very thick, dashed] (uu) to (y);
    \draw[->, very thick] (z) to (x);

    \node[] at (1,1.5) {\texttt{education}};
    \node[] at (-5,1.5) {\texttt{season of birth}};
    \node[] at (5.5,1.5) {\texttt{income}};
    \node[] at (13,1.5) {\texttt{socioeconomic status}};
  \end{tikzpicture} 
  \caption{A data generating process (DGP) with a hidden confounder $H$ and an instrument $Z$. A variation in $X$ comes from both $H$ and $Z$. Intuitively speaking, the external source of variation from $Z$ can help improve an estimation by removing the effect of $H$ on $X$.}
  \label{fig:dgp}
  \vspace{-10pt}
\end{figure}
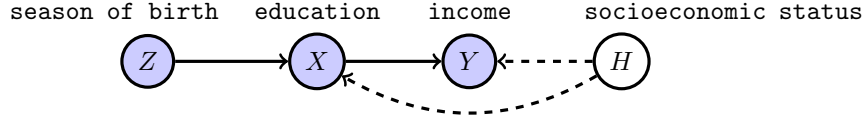

When hidden confounders exist between $X$ and $Y$, the error term $\varepsilon$ in \eqref{eq:sem} is generally correlated with $X$.
Hence, $\mathbb{E}[\varepsilon|X]\neq 0$ and it follows from \eqref{eq:sem} that
\begin{equation}
    \mathbb{E}[Y\,|\,X=x] = f(x) + \mathbb{E}[\varepsilon \,|\, X=x],
\end{equation}
\noindent which implies that $\mathbb{E}[Y \,|\, \text{do}(X=x)] \neq \mathbb{E}[Y\,|\, X=x]$. Thus, standard least-square regression no longer provides a valid estimate of $f$ for making a prediction about the outcome of an intervention on $X$ \citep{Newey03:NIV,Hartford17:DIV,Singh19:KIV}. 
To handle hidden confounders, we assume access to an instrumental variable(s) $Z$ which satisfies the following assumptions:
\begin{enumerate*}[label=(\roman*)]
    \item \label{asmp:relevance} \textbf{Relevance:} $Z$ has a causal influence on $X$.
    \item \label{asmp:exclusion} \textbf{Exclusion restriction:} $Z$ affects $Y$ only through $X$, \ie, ${Y \ci Z} | X,\varepsilon$.
    \item \label{asmp:unconfounded} \textbf{Unconfounded instrument(s):} $Z$ is independent of the error, \ie, ${\varepsilon \ci Z}$.
\end{enumerate*}

The properties of $Z$ imply that $\mathbb{E}[\varepsilon \,|\,Z] = 0$. Taking the expectation of \eqref{eq:sem} w.r.t. $Y$ conditioned on $Z$ yields the following integral equation
\begin{equation}
    \label{eq:integral-eq}
    \mathbb{E}[Y | Z] = \int_{\inx} f(x) \,\mathrm{d}\pp{P}(x|Z),
\end{equation}
which is a Fredholm integral equation of the first kind.
Recent works in nonparametric IV regression have adopted this perspective \citep{horowitz2011applied,Newey03:NIV,Hartford17:DIV,Singh19:KIV} despite the fact that solving \eqref{eq:integral-eq} directly is an ill-posed problem as it involves inverting linear compact operators \citep{Kress89:LIE,Nashed74:Inverse,Newey03:NIV}.

To illustrate the role of an instrument, we consider two special cases. 
When $X$ is perfectly correlated with $Z$, the treatment is uncorrelated with the hidden confounder. 
In other words, we recover the strong ignorability assumption \citep{Rubin74:Causal,Rubin05:PO} required for causal inference.
When $Z$ is independent of $X$, the instrument is useless as it has no predictive power over treatment so the structural function $f$ is unidentifiable from the data.
Therefore, the most interesting cases lie between these two extremes, especially when $X$ and $Z$ are weakly correlated, see, \eg, \citep{Bound95:WeakInstrument,Staiger97:Weak}\citep[pp. 205--216]{Angrist08:Harmless}. 


\subsection{Previous work}
\label{sec:related-work}

Early applications of instrumental variables often assume linear relationships between $Z$ and $X$ as well as $X$ and $Y$ \citep{Angrist96:IV,Angrist08:Harmless}.
When there is a single endogeneous variable and instrument, the structural parameter can be estimated consistently by the instrumental variable (IV) estimator \citep{Angrist96:IV}.
Interestingly, we can obtain this estimate using a two-stage procedure: regress $X$ on $Z$ using ordinary least square (OLS) to calculate the predicted value of $X$ and used that as an explanatory variable in the structural equation to estimate the structural parameter using OLS. 
When there are multiple instruments, the two-stage least squares (2SLS) estimator is obtained by using all the instruments simultaneously in the first-stage regression. 
\citet[Theorem 5.3]{Wooldridge01:Econometric} asserts that the 2SLS estimator is the most efficient IV estimator; see, \eg, \citep{Wooldridge01:Econometric,Angrist08:Harmless} for a detailed exposition.

Recently, several extensions of 2SLS have been proposed to overcome the linearity constraint.
The first line of work replaces linear regression by a linear projection onto a set of known basis functions \citep{Newey03:NIV,Blundell07:Semi-NIV,horowitz2011applied,Chen12:NIV}.
\citet{Chen18:NIV-Uniform} provides a uniform convergence rate of this approach.
However, there exists no principled way of choosing the appropriate set of basis functions.
The second line of work replaces the first-stage regression by a conditional density estimate of $\mathbb{P}(X|Z)$ \citep{Hall05:NIV,Darolles11:NIV}.
Despite being more flexible, such approaches are known to suffer from the curse of dimensionality \citep[Ch. 1]{Tsybakov08:INE}.
Other extensions of 2SLS are DeepIV \citep{Hartford17:DIV} and KernelIV \citep{Singh19:KIV} algorithms.
In \citep{Hartford17:DIV}, \eqref{eq:integral-eq} is solved by first estimating $\mathbb{P}(X|Z)$ with a mixture of deep generative models on which $f$ is learned using another deep neural network.
Instead of neural networks, \citet{Singh19:KIV} proposes to model the first-stage regression using the conditional mean embedding of $\mathbb{P}(X|Z)$ \citep{Song10:KCOND,Song2013,Muandet17:KME} which is then used in the second-stage kernel ridge regression.

\vspace{-5pt}
\paragraph{The curse of two-stage methods.}
Two-stage procedures have two fundamental issues. 
First, such procedures violate Vapnik's principle \citep{Vapnik98:SLT}: ``\emph{[...] when solving a problem of interest, do not solve a more general problem as an intermediate step [...]}''.
Specifically, estimating the conditional density \citep{Hartford17:DIV} or the conditional mean embedding  \citep{Singh19:KIV} via regression in the first stage can be harder than estimating the parameter of interest in the second stage. 
The first stage is even referred as the ``forbidden regression'' in econometrics \citep[Ch. 4.6]{Angrist08:Harmless}.
On top of that, we usually only observe a single sample from each $\mathbb{P}(X|Z=z)$, which further increases the difficulty of the task.
Second, although two-stage procedures are asymptotically consistent, the first-stage estimate creates a finite-sample bias in the second-stage estimate \citep[Sec. 4.6.4]{Angrist08:Harmless}. 
This bias can be alleviated through sample splitting \citep{Angrist93:Split} which is also used in \citep{Hartford17:DIV,Singh19:KIV}.
Thus, two-stage procedures are less sample efficient and could yield biased estimates when run on the smaller datasets common in economics and social sciences.

The generalized method of moments (GMM) framework provides another set of popular approaches for estimating $f$ \citep{Hansen82:GMM,Hall05:GMM}.
Unlike two-stage procedures, GMM-based algorithms find a function $f$ that satisfies the orthogonality condition $\mathbb{E}[\varepsilon | Z]=0$ directly. 
Specifically, if $g_1,g_2,\ldots,g_m$ are arbitrary real-valued functions, the orthogonality condition implies that $\mathbb{E}[(Y - f(X))g_j(Z)] = 0$ for $j=1,\ldots,m$.
The GMM estimate of $f$ can then be obtained by minimizing the quadratic form $\frac{1}{2}\sum_{j=1}^m\psi(f,g_j)^2$ where $\psi(f,g) := \mathbb{E}[(Y - f(X))g(Z)]$.
This estimator can be interpreted as a generalization of the 2SLS estimator in the linear setting \citep{White82:IV}.
Recently, extensions of GMM-based methods where both $f$ and $g$ are parameterized by deep neural networks have successfully been used to solve non-linear IV regression \citep{Lewis18:AGMM,Bennett19:DeepGMM}. 
In contrast, \citet{Muandet20:KCM} considers the set of RKHS functions which allow for an analytic formulation of the orthogonality condition.

\section{Dual IV}
\label{sec:dualiv-regression}

In this section, we reformulate the integral equation \eqref{eq:integral-eq} as an empirical risk minimization problem and present DualIV algorithm.

\subsection{Empirical risk minimization}

Let $\ell: \mathbb{R}\times\mathbb{R}\to\mathbb{R}_+$ be a proper, convex, and lower semi-continuous loss function for any value in its first argument.\footnote{The function $f$ is \emph{proper} if $\text{dom}\, f\neq\emptyset$ and $f(x)>-\infty, \,\forall x\in\inx$. It is  \emph{lower (upper) semi-continuous} at $x_0\in\inx$ if for $\varepsilon > 0$ there exists a neighborhood $N(x_0)$ of $x_0$ such that $\varepsilon < (>) f(x)-f(x_0)$ for all $x\in N(x_0)$ \citep{Rockafellar70:Convex}.}
Let $\mathcal{F}$ be an arbitrary class of continuous functions which we assume contains $f$ that fulfills the integral equation \eqref{eq:integral-eq}.
Then, we can formulate \eqref{eq:integral-eq} as
\begin{equation}\label{eq:erm}
    \min_{f\in \mathcal{F}} \, R(f) 
    := \ep_{\mathit{YZ}}\left[\ell(Y,\mathbb{E}_{X|Z}[f(X)]) \right],
\end{equation}
where $R(f)$ denotes the expected risk of $f$. 
To understand how \eqref{eq:integral-eq} and \eqref{eq:erm} are related, let us consider the squared loss $\ell(y,y') = (y-y')^2$ and define $h(z) := \mathbb{E}_{X|z}[f(X)]$. 
Then, the solution to \eqref{eq:erm} is the minimum mean square error (MMSE) estimator $h^{*}(z) := \mathbb{E}[Y|z]$, which is exactly the LHS of \eqref{eq:integral-eq}. 
If there exists no $f\in\mathcal{F}$ for which $h^{*}(z)=\mathbb{E}[Y|z]$, we use $h^{*}(z)$ as the best MMSE approximation.


The key challenge is if $f$ is noncontinuous in $h(z)$, it is not guaranteed to be consistently estimated even if $h(z)$ is estimated correctly \citep{Newey03:NIV}.
We defer further discussion to Section \ref{sec:identifiability}.
In addition, it remains cumbersome to solve \eqref{eq:erm} directly because of the inner expectation.
To circumvent this, we look to similar two-stage problems in stochastic programming \citep{Shapiro14:LSP,Dai17:Dual}.
For example, in \citep{Dai17:Dual}, the problem of learning from conditional distributions was formulated in a similar fashion to \eqref{eq:erm}.
Moreover, \citep{Hsu19} proposes the deconditional mean embedding (DME) which solves the integral equation \eqref{eq:integral-eq} by performing a closed-form ``inversion'' of the conditional mean embedding of $\pp{P}(X|Z)$ (see \citep{Muandet17:KME,Song13:CME} for a review).
In contrast, we solve the equivalent dual formulation of \eqref{eq:erm} instead of \eqref{eq:integral-eq}.

\subsection{Dual formulation}

To derive the dual of \eqref{eq:erm}, we employ two existing results, \emph{interchangeability} and \emph{Fenchel duality}, which we review; see, \eg, \citep[Lemma 1]{Dai17:Dual}, \citep[Ch. 14]{Rock98:VA}, and \citep[Ch. 7]{Shapiro14:LSP} for more details. 

\begin{theorem}[Interchangeability]\label{thm:interchange}
Let $\omega$ be a random variable on $\Omega$ and, for any $\omega\in\Omega$, the function $f(\cdot,\omega):\mathbb{R}\to (-\infty,\infty)$ is proper and upper semi-continuous concave function. Then,
\begin{equation}
    \mathbb{E}_{\omega}\left[\max_{u\in\mathbb{R}} f(u,\omega)\right] = \max_{u(\cdot)\in\mathcal{U}(\Omega)}\mathbb{E}_{\omega}[f(u(\omega),\omega)],
\end{equation}
\noindent where $\mathcal{U}(\Omega) := \{u(\cdot): \Omega\to\mathbb{R}\}$ is the entire space of functions defined on the support $\Omega$. 
\end{theorem}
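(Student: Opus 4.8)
The plan is to establish the identity by proving its two inequalities separately, following the template of Theorem~14.60 in \citep{Rock98:VA}. Throughout, I would write $v(\omega) := \sup_{u\in\mathbb{R}} f(u,\omega)$ for the pointwise value function, so the left-hand side is $\mathbb{E}_\omega[v(\omega)]$, and read ``$\max$'' as ``$\sup$'' wherever a pointwise maximizer may fail to exist.

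First I would dispose of the easy direction, $\mathbb{E}_\omega[v(\omega)] \geq \max_{u(\cdot)\in\mathcal{U}(\Omega)} \mathbb{E}_\omega[f(u(\omega),\omega)]$: for any measurable $u(\cdot)$ and every $\omega$ one has $f(u(\omega),\omega) \leq v(\omega)$ by definition of the supremum, so integrating in $\omega$ and then taking the supremum over $u(\cdot)$ gives the bound. This uses nothing beyond the measurability needed to make the expectations meaningful.

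The substance is the reverse inequality, which I would obtain by a measurable-selection argument. Fix $\epsilon>0$ and define the truncated target $t_\epsilon(\omega) := \min\{\,v(\omega)-\epsilon,\ 1/\epsilon\,\}$, the truncation handling those $\omega$ with $v(\omega)=+\infty$. Since each $f(\cdot,\omega)$ is proper, concave and upper semi-continuous, its superlevel sets $\{u : f(u,\omega)\geq c\}$ are closed and convex, so $f$ is a normal integrand; hence $\omega\mapsto v(\omega)$ is measurable and the set-valued map $\omega\mapsto S_\epsilon(\omega):=\{u\in\mathbb{R}: f(u,\omega)\geq t_\epsilon(\omega)\}$ is measurable with nonempty closed values. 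By the Kuratowski--Ryll-Nardzewski selection theorem there is then a measurable selector $u_\epsilon(\cdot)\in\mathcal{U}(\Omega)$ with $f(u_\epsilon(\omega),\omega)\geq t_\epsilon(\omega)$ for all $\omega$, so that
\begin{equation}
    \max_{u(\cdot)\in\mathcal{U}(\Omega)} \mathbb{E}_\omega\left[f(u(\omega),\omega)\right] \;\geq\; \mathbb{E}_\omega\left[f(u_\epsilon(\omega),\omega)\right] \;\geq\; \mathbb{E}_\omega\left[t_\epsilon(\omega)\right].
\end{equation}
Letting $\epsilon\downarrow 0$, monotone convergence gives $\mathbb{E}_\omega[t_\epsilon(\omega)]\uparrow\mathbb{E}_\omega[v(\omega)]$, which closes the gap; combining the two inequalities proves the claim. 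When the relevant integrand is coercive or strictly concave (as for the loss-induced $f$ used later in the paper), the pointwise argmax is attained and ``$\max$'' is literal.

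The main obstacle is the measurability infrastructure: confirming that upper semi-continuity together with concavity make $f$ a normal integrand, so that $v(\cdot)$ is measurable and a near-maximizing selection $u_\epsilon(\cdot)$ can be taken inside $\mathcal{U}(\Omega)$. Once that is secured, the pointwise-domination step and the monotone-convergence passage to the limit are routine. A secondary subtlety worth flagging is that $\mathcal{U}(\Omega)$ should be understood as the measurable functions $\Omega\to\mathbb{R}$ (with a mild integrability qualification) for the right-hand side to be well defined; this does not affect the selector constructed above, which is measurable by design.
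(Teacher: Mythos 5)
The paper does not actually prove this theorem itself: it is stated as a review of a known result, with pointers to \citep[Lemma 1]{Dai17:Dual}, \citep[Ch.~14]{Rock98:VA}, and \citep[Ch.~7]{Shapiro14:LSP}. Your argument is essentially the standard proof of Theorem~14.60 in Rockafellar--Wets --- the easy pointwise-domination inequality in one direction, and a measurable near-maximizing selection (via normal integrands and Kuratowski--Ryll-Nardzewski) followed by monotone convergence for the reverse --- and it is sound; the measurability and integrability caveats you flag are precisely the hypotheses the paper's informal statement suppresses (joint measurability of $f$ in $\omega$ so that it is a normal integrand, $\mathcal{U}(\Omega)$ being the \emph{measurable} functions, and $\mathbb{E}_\omega[v(\omega)]$ being well defined), so no genuine gap remains beyond what the statement itself leaves implicit.
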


\begin{definition}[Fenchel duality]\label{def:fenchel}
Let $\ell: \mathbb{R}\times\mathbb{R}\to\mathbb{R}_+$ be a proper, convex, and lower semi-continuous loss function for any value in its first argument and $\ell^{\star}_y := \ell^{\star}(y,\cdot)$ a convex conjugate of $\ell_y := \ell(y,\cdot)$ which is also proper, convex, and lower semi-continuous w.r.t. the second argument. 
Then, $\ell_y(v) = \max_u\{uv - \ell_y^{\star}(u)\}$. 
The maximum is achieved at $v\in\partial\ell^{\star}(u)$, or equivalently $u\in\partial\ell(v)$.
\end{definition}

Applying the interchangeability and Fenchel duality to \eqref{eq:erm} yields the expected loss
\begin{eqnarray*}
 R(f) &=& \mathbb{E}_{\mathit{YZ}}[\max_{u\in\mathbb{R}}\{\mathbb{E}_{X|Z}[f(X)] u - \ell^{\star}_Y(u)\}] \\
      &=& \max_{u\in\mathcal{U}}\; \mathbb{E}_{\mathit{YZ}}[\mathbb{E}_{X|Z}[f(X)] u(Y,Z)
        - \ell^{\star}_Y(u(Y,Z))] \\
      &=& \max_{u\in\mathcal{U}}\; \mathbb{E}_{\mathit{XYZ}}[f(X) u(Y,Z)] - \mathbb{E}_{\mathit{YZ}}[\ell^{\star}_Y(u(Y,Z))]
\end{eqnarray*}
where $\mathcal{U}$ is the space of continuous functions over $\mathcal{Y}\times\mathcal{Z}$. 
Hence, \eqref{eq:erm} can be reformulated as 
\begin{equation}
    \label{eq:saddle-point-form}
    \min_{f\in\mathcal{F}}\max_{u\in\mathcal{U}} \,
     \mathbb{E}_{\mathit{XYZ}}\left[f(X)u(Y,Z)\right] - \mathbb{E}_{\mathit{YZ}}\left[\ell^{\star}_Y(u(Y,Z))\right].
\end{equation}
Following \citep{Dai17:Dual}, we will refer to $u\in\mathcal{U}$ as the \emph{dual function}.
Note that this function depends on only the outcome $Y$ and the instrument $Z$, but not the treatment $X$.

The advantages of our formulation \eqref{eq:saddle-point-form} over \eqref{eq:integral-eq} and \eqref{eq:erm} are twofold. 
First, there is no need to estimate $\mathbb{E}_{X|Z}[f(X)]$ or $\pp{P}(X|Z)$ explicitly.
Second, the target function $f$ appears linearly in \eqref{eq:saddle-point-form} which makes it convex in $f$.
Since $\ell^{\star}_y$ is also convex, \eqref{eq:saddle-point-form} is concave in the dual function $u$.
Hence, \eqref{eq:saddle-point-form} is essentially a convex-concave saddle-point problem for which efficient solvers exist \citep{Dai17:Dual}.

For the squared loss $\ell(y,y') = (y-y')^2$, we have $\ell^{\star}_y(w) = wy + \frac{1}{2}w^2$ (see Appendix \ref{sec:conjugate-loss} for the derivation) and the saddle-point problem \eqref{eq:saddle-point-form} reduces to
\begin{align}
    \label{eq:1sls-obj}
    \min_{f\in\mathcal{F}}\max_{u\in\mathcal{U}} \; \Psi(f,u) &:=
    \mathbb{E}_{\mathit{XYZ}}\left[(f(X)-Y)u(Y,Z)\right] - \frac{1}{2}\mathbb{E}_{\mathit{YZ}}\left[u(Y,Z)^2\right].
\end{align}
To solve \eqref{eq:1sls-obj}, one can adopt an SGD-based algorithm developed by \citet{Dai17:Dual}. 
Alternatively, we propose in Section \ref{sec:estimation} a simple algorithm that can solve \eqref{eq:1sls-obj} in closed form.


\subsection{Interpreting the dual function}
\label{sec:interpretation}

The dual function $u(y,z)$ plays an important role in our framework.
To understand its role, we consider the minimization and maximization problems in \eqref{eq:1sls-obj} separately. 
For any $f\in\mathcal{F}$, the maximization problem is $\max_{u\in\mathcal{U}} 
\mathbb{E}_{\mathit{XYZ}}[(f(X) - Y) u(Y,Z)] - \frac{1}{2}\|u(Y, Z)\|_{L^2(\pp{P}_{\mathit{YZ}})}^2$ where the first term can be viewed, loosely speaking, as a loss function and the second as a regularizer. 
Intuitively, we are seeking $u^{*}\in\mathcal{U}$ that is least orthogonal to the residual.
Given $u^{*}$, the outer minimization problem $\min_{f\in\mathcal{F}} \;
\mathbb{E}_{\mathit{XYZ}}\left[(f(X)-Y) u^{*}(Y,Z)\right]$ finds the function $f$ that yields the most orthogonal residual to $u^{*}$.
Our procedure clearly differs from previous two-stage methods as the minimization and maximization stages are interdependent.

Examining the formulation in the context of instrumental variable regression, the residual contains the variation that cannot be explained by the current estimate of $f$ due to hidden confounding.
We select $u$ that maximally reweights the residuals according to how inconsistent they are w.r.t. the unconfounded joint distribution of $Y$ and $Z$.
Given $u$, we then select $f$ that minimizes the inconsistencies between the residuals and $u$.
Hence, at the equilibrium, we are left with residuals uncorrelated with $Y$ and $Z$ which can be attributed to noise due to unobserved confounding.

Lastly, we draw a connection between \eqref{eq:1sls-obj} and GMM.
Let $g_1,g_2,\ldots,g_m$ be real-valued functions on $\iny\times\inz$ and $\psi(f,g) := \ep[(Y-f(X))g(Y,Z)]$. 
When $\mathcal{U} = \text{span}\{g_1,\ldots,g_m\}$, it is not difficult to show that $\max_{u\in\mathcal{U}} \Psi(f,u) = \frac{1}{2}\bm{\psi}^\top\Lambda^{-1}\bm{\psi}$
where $\Lambda := \mathbb{E}_{\mathit{YZ}}[\mathbf{g}\otimes\mathbf{g}]$ with $\mathbf{g} := (g_1(Y,Z),\ldots,g_m(Y,Z))^\top$; see Appendix \ref{sec:gmm-is-dualiv}.
That is, minimizing the above over $f$ yields a formulation that strongly resembles the GMM objective, with the dual function $u(Y,Z)$ playing a role similar to that of an instrument. However, we must clarify that $u$ cannot act as an instrument since it depends on $Y$ and thereby violates the exclusion restriction assumption.
\citet[Appendix F]{Liao20:NeuralSEM}, using an alternative formulation similar to \eqref{eq:erm} and \eqref{eq:1sls-obj},  showed that one can obtain a dual function $u(Z)$ that can act as an instrument.  
Furthermore, we also note that AGMM \citep{Lewis18:AGMM} and DeepGMM \citep{Bennett19:DeepGMM} rely on minimax optimization, similar to \eqref{eq:1sls-obj}, but were formulated based on the GMM framework.

\subsection{Theoretical analysis}
\label{sec:identifiability}

This section provides the conditions for which the true structural function $f^*$ can be identified by the optimum of the saddle-point problem \eqref{eq:1sls-obj}. 
We lay out the assumptions needed for the optimal dual function $u^{*}$ to be unique and continuous, show that the saddle-point formulation \eqref{eq:1sls-obj} is equivalent to the problem \eqref{eq:erm} under the squared loss and prove that the solution of \eqref{eq:1sls-obj} given $u^{*}$ is indeed $f^*$.

\begin{assumption}\label{asmp:identifiability}
\begin{enumerate*}[label=(\roman*)]
    \item \label{asmp:cont} $\pp{P}(X|Z)$ is continuous in $Z$ for any values of $X$.
    \item \label{asmp:realize} The function class $\mathcal{F}$ is correctly specified, \ie, $f^*\in\mathcal{F}$.
\end{enumerate*}
\end{assumption}

Following \citep{Dai17:Dual}, we define the optimal dual function for any pair $(y,z)\in\iny\times\inz$ as
$u^*(y,z) \in \arg\max_{u\in\rr} \{ \ep_{X|z}[f(X)-y]u - (1/2)u^2 \}$.
Since this is an unconstrained quadratic program, $u^*(y,z)$ takes the form $\ep_{X|z}[f(X)] - y$. 
Given Assumption \ref{asmp:identifiability} and the loss function $\ell$ is convex and continuously differentiable, it follows from \citep[Proposition 1]{Dai17:Dual} that $u^*$ is unique and continuous.

Next, we shows that if $(f^*,u^*)$ is the saddle-point of \eqref{eq:1sls-obj}, $f^*$ minimizes the original objective \eqref{eq:erm}.
The result follows from plugging $u^* = \ep_{X|z}[f(X)]-y$ into the dual loss $\Psi(f,u)$ in \eqref{eq:1sls-obj}; see Appendix \ref{sec:proof-saddle-sol} for the detailed proof.

\begin{restatable}{proposition}{saddlepoint}
\label{prop:saddle-solution}
    Let $\ell(y,y') = \frac{1}{2}(y-y')^2$.
    Then, for any fixed $f$, we have $R(f) = \max_{u}\Psi(f,u)$.
\end{restatable}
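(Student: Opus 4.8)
The plan is to prove the identity $R(f) = \max_u \Psi(f, u)$ directly by computing the inner maximization pointwise and comparing with the definition of $R(f)$ under the squared loss.

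First I would recall the setup: with $\ell(y, y') = \frac{1}{2}(y - y')^2$, the conjugate in the second argument is $\ell_y^\star(u) = uy + \frac{1}{2}u^2$ (the same computation referenced in Appendix~\ref{sec:conjugate-loss}, up to the factor arising from the $\frac12$ normalization). By the Fenchel duality of Definition~\ref{def:fenchel}, we have the pointwise identity $\ell_y(v) = \max_{u \in \rr}\{uv - \ell_y^\star(u)\}$, so that
\begin{equation*}
    \ell\bigl(y, \ep_{X|z}[f(X)]\bigr) = \max_{u \in \rr}\Bigl\{ u\,\ep_{X|z}[f(X)] - u y - \tfrac{1}{2}u^2 \Bigr\}
    = \max_{u \in \rr}\Bigl\{ \bigl(\ep_{X|z}[f(X)] - y\bigr) u - \tfrac{1}{2}u^2 \Bigr\}.
\end{equation*}
Next I would take expectations over $(Y, Z)$ and invoke the Interchangeability theorem (Theorem~\ref{thm:interchange}) to pull the pointwise maximum out of the expectation: the integrand $u \mapsto (\ep_{X|z}[f(X)] - y)u - \frac12 u^2$ is a proper, upper semi-continuous concave function of $u$ for each fixed $(y,z)$, so
\begin{equation*}
    R(f) = \ep_{YZ}\Bigl[\max_{u \in \rr}\bigl\{ (\ep_{X|Z}[f(X)] - Y) u - \tfrac12 u^2 \bigr\}\Bigr]
    = \max_{u \in \mathcal{U}} \ep_{YZ}\Bigl[ (\ep_{X|Z}[f(X)] - Y) u(Y,Z) - \tfrac12 u(Y,Z)^2 \Bigr].
\end{equation*}
Finally I would use the tower property to rewrite $\ep_{YZ}[\ep_{X|Z}[f(X)]\, u(Y,Z)] = \ep_{XYZ}[f(X) u(Y,Z)]$, which is licit since $u$ depends only on $(Y,Z)$ and, conditioned on $Z$, the variable $X$ is independent of $Y$ under the assumed DGP (the exclusion restriction / graph structure), so that the conditional expectation factors correctly. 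This turns the right-hand side into exactly $\max_u \Psi(f, u)$ as written in \eqref{eq:1sls-obj}, completing the proof.

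The main obstacle I anticipate is making the interchange of expectation and maximization fully rigorous — specifically, verifying the measurability/integrability hypotheses of Theorem~\ref{thm:interchange} and confirming that the optimizing $u^*(y,z) = \ep_{X|z}[f(X)] - y$ is indeed a measurable (here even continuous, given Assumption~\ref{asmp:identifiability}\ref{asmp:cont}) function lying in the admissible class $\mathcal{U}$, together with the boundedness needed for the expectations to be finite (here $|Y| < M$ and $f$ continuous help). A secondary point worth stating carefully is the tower-property step: one should be explicit that $\ep[f(X) u(Y,Z) \mid Z] = \ep[f(X)\mid Z]\,\ep[u(Y,Z)\mid Z]$ fails in general, and instead one writes $\ep[f(X)u(Y,Z)\mid Y,Z]$ or conditions appropriately so that the identity $\ep_{YZ}[\ep_{X|Z}[f(X)] u(Y,Z)] = \ep_{XYZ}[f(X)u(Y,Z)]$ follows from $X \ci Y \mid Z$; since only this marginal identity is needed (not a full factorization), the argument goes through cleanly.
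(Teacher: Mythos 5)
Your argument is essentially the paper's, run in the opposite direction: the paper's proof in Appendix \ref{sec:proof-saddle-sol} substitutes the pointwise maximizer $u^*(y,z)=\ep_{X|z}[f(X)]-y$ (the solution of the same unconstrained quadratic program you obtain via the Fenchel conjugate) into $\Psi(f,u)$ and simplifies down to $R(f)$, whereas you start from $R(f)$, conjugate the squared loss, and invoke Theorem \ref{thm:interchange} to arrive at $\max_u\Psi(f,u)$. Both proofs rest on the same two ingredients (interchangeability plus the pointwise quadratic maximization) and on the same expectation-swapping identity $\ep_{\mathit{XYZ}}[f(X)u(Y,Z)]=\ep_{\mathit{YZ}}[\ep_{X|Z}[f(X)]\,u(Y,Z)]$, so the approaches match.

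The one place you go beyond the paper is in trying to justify that swapping identity, and your justification is wrong: $X\ci Y\mid Z$ does \emph{not} hold under the assumed DGP. In Figure \ref{fig:dgp}, $X$ is a direct cause of $Y$, so $X$ and $Y$ are not d-separated by $Z$; under the SCM $Y=f(X)+\varepsilon$, observing $Y$ is informative about $f(X)$ beyond what $Z$ provides. What the identity actually requires is the mean-independence condition $\ep[f(X)\mid Y,Z]=\ep[f(X)\mid Z]$ almost surely, which fails in general. To be clear, the paper's own proof takes exactly the same step silently (the third equality in its display, where $\ep_{\mathit{XYZ}}[(f(X)-Y)(\ep_{X|Z}[f(X)]-Y)]$ is replaced by $\ep_{\mathit{YZ}}[(\ep_{X|Z}[f(X)]-Y)^2]$), so this is an issue you inherited rather than introduced; but your proposed repair via the exclusion restriction does not close it, and if you want to state the proposition rigorously you must either assume the mean-independence condition outright or pass to a formulation in which the dual function depends on $Z$ alone, as in the alternative derivation of \citet[Appendix F]{Liao20:NeuralSEM}.
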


By Proposition \ref{prop:saddle-solution} and the convexity of the loss $\ell(y,y')$, we obtain the following result.
\begin{theorem}\label{thm:identifiability}
    Let $\ell(y,y') = \frac{1}{2}(y-y')^2$ and assume that Assumption \ref{asmp:identifiability} holds. Then, $(f^*,u^*)$ is the saddle-point of a minimax problem $\min_{f\in\mathcal{F}}\max_{u\in\mathcal{U}}\,\Psi(f,u)$.
\end{theorem}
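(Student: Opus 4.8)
The plan is to check the two saddle-point inequalities directly for the pair $(f^*,u^*)$, with $u^*(y,z):=\ep_{X|z}[f^*(X)]-y$; that is, to show
\[
\Psi(f^*,u)\ \le\ \Psi(f^*,u^*)\ \le\ \Psi(f,u^*)\qquad\text{for every }u\in\mathcal{U}\text{ and }f\in\mathcal{F}.
\]
First I would record the consequences of Assumption~\ref{asmp:identifiability}: realizability gives that $f^*$ solves the integral equation~\eqref{eq:integral-eq}, so $\ep_{X|z}[f^*(X)]=\ep[Y\,|\,z]$ for $\pp{P}_Z$-almost every $z$ and hence $u^*(y,z)=\ep[Y\,|\,z]-y$; continuity of $z\mapsto\pp{P}(X\,|\,z)$ makes $u^*$ continuous, so $u^*\in\mathcal{U}$.

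For the left inequality I would rewrite $\Psi(f^*,\cdot)$ as an expectation over $(Y,Z)$ alone by undoing the interchange used to derive~\eqref{eq:saddle-point-form}: since $\ep_{XYZ}[f^*(X)u(Y,Z)]=\ep_{YZ}[\ep_{X|Z}[f^*(X)]\,u(Y,Z)]$, we get $\Psi(f^*,u)=\ep_{YZ}\big[(\ep_{X|Z}[f^*(X)]-Y)u(Y,Z)-\tfrac12 u(Y,Z)^2\big]$. For each fixed $(y,z)$ the integrand is a strictly concave quadratic in the scalar $u$, maximised at $u=\ep_{X|z}[f^*(X)]-y=u^*(y,z)$, so by Theorem~\ref{thm:interchange} the pointwise maximiser also maximises over $\mathcal{U}$; hence $\Psi(f^*,u)\le\Psi(f^*,u^*)$, and by Proposition~\ref{prop:saddle-solution} the common value is $R(f^*)$. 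This is essentially immediate.

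For the right inequality I would compute the difference. With $u^*$ fixed the regulariser is constant, so $\Psi(f,u^*)-\Psi(f^*,u^*)=\ep_{XYZ}[(f(X)-f^*(X))u^*(Y,Z)]$. Moving the $X$-expectation inside as in the derivation of~\eqref{eq:saddle-point-form} and then conditioning on $Z$, this becomes $\ep_Z\big[\ep_{X|Z}[f(X)-f^*(X)]\cdot\ep_{Y|Z}[u^*(Y,Z)]\big]$, and $\ep_{Y|Z}[u^*(Y,Z)]=\ep[Y\,|\,Z]-\ep[Y\,|\,Z]=0$ by the integral equation. Hence $\Psi(f,u^*)=\Psi(f^*,u^*)$ for all $f\in\mathcal{F}$, which gives the right inequality (indeed an equality), and $(f^*,u^*)$ is a saddle point. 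If realizability is weakened to ``$f^*$ minimises $R$'', the same computation gives $\Psi(f,u^*)-\Psi(f^*,u^*)=\ep_Z[\ep_{X|Z}[f(X)-f^*(X)]\,(\ep_{X|Z}[f^*(X)]-\ep[Y\,|\,Z])]$, which is $\ge 0$ along feasible directions exactly because $f^*$ satisfies the first-order condition for the convex problem $\min_{f\in\mathcal{F}}R(f)$ --- this is the role of the convexity of $\ell$.

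The step I expect to need the most care is the identity $\ep_{XYZ}[g(X)u^*(Y,Z)]=\ep_{YZ}[\ep_{X|Z}[g(X)]\,u^*(Y,Z)]$, used in both inequalities: it is valid because, by construction of~\eqref{eq:saddle-point-form}, the joint $\ep_{XYZ}$ in~\eqref{eq:1sls-obj} is the one in which $X$ enters only through $\pp{P}(X\,|\,Z)$, and one must not treat it as an expectation over a joint where $X$ and $Y$ are dependent given $Z$, since then the factorisation yielding $\ep_{Y|Z}[u^*(Y,Z)]=0$ would break. Everything else --- the quadratic maximisation and the iterated expectations --- is bookkeeping.
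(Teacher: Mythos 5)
Your proof is correct, and it is more explicit than the paper's, which disposes of this theorem in one line (``By Proposition~\ref{prop:saddle-solution} and the convexity of the loss'') without ever verifying the saddle-point inequalities. The paper's intended route is: $R(f)=\max_u\Psi(f,u)$ by Proposition~\ref{prop:saddle-solution}, $R$ is convex in $f$ because $\ell$ is convex and $f\mapsto\ep_{X|Z}[f(X)]$ is linear, and $f^*$ attains $\min_f R$ under realizability; the saddle-point property is then asserted via standard minimax reasoning. You instead verify $\Psi(f^*,u)\le\Psi(f^*,u^*)\le\Psi(f,u^*)$ directly: the left inequality is exactly Proposition~\ref{prop:saddle-solution} plus interchangeability, and for the right you compute $\Psi(f,u^*)-\Psi(f^*,u^*)=\ep_{Z}\big[\ep_{X|Z}[f(X)-f^*(X)]\,\ep_{Y|Z}[u^*(Y,Z)]\big]=0$, exposing the stronger (and correct) fact that $\Psi(\cdot,u^*)$ is \emph{constant} in $f$ under realizability, so the saddle point is degenerate on the primal side; this is genuinely more informative than the paper's argument. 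You are also right to flag the one delicate step, namely that $\ep_{XYZ}[g(X)u(Y,Z)]$ must be read as the iterated expectation $\ep_{YZ}\ep_{X|Z}$ (the decoupled joint $\pp{P}(Y,Z)\pp{P}(X|Z)$) rather than the true joint, since $X\not\ci Y\mid Z$ under confounding; the paper's own proof of Proposition~\ref{prop:saddle-solution} silently makes the same substitution, so your reading is the one the framework requires. The only loose end is your closing aside on weakening realizability to ``$f^*$ minimises $R$'': the first-order-condition argument there additionally needs $\mathcal{F}$ convex, but this is outside the theorem as stated and does not affect the proof.
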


By virtue of Theorem \ref{thm:identifiability}, we can identify the true function $f^*$ under relatively weak assumptions.
In contrast, previous work usually require stronger assumptions such as the completeness condition \citep{Newey03:NIV,Singh19:KIV} which specifies that the first-stage conditional expectation $\mathbb{E}_{X|z}[f(X)]$ is injective, or $h(z) = \mathbb{E}_{X|z}[f(X)]$ is a smooth function of $z$ \citep{Singh19:KIV,Chen18:NIV-Uniform,Chen12:NIV}.
Since we do not perform first-stage regression, we only require $\pp{P}(X|Z)$ is continuous in $Z$ for any value of $X$.
The assumption that \eqref{eq:erm} is correctly specified, \ie, $f^{*}\in\mathcal{F}$, is standard in the literature \citep{Newey03:NIV,horowitz2011applied,Singh19:KIV}.

\begin{figure}[t!]
    \centering
    \includegraphics[height=0.3\columnwidth]{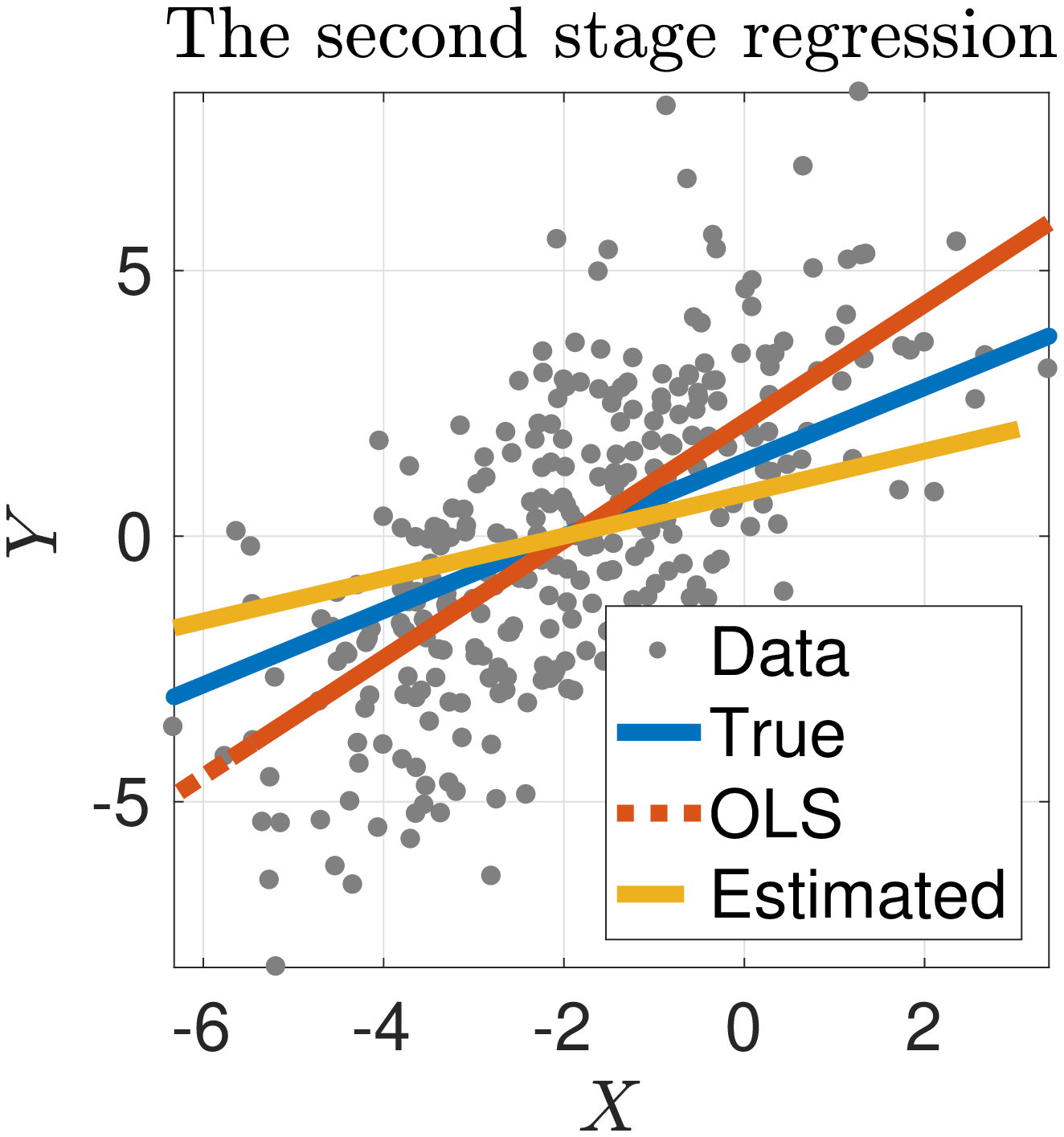}\hfill
    \includegraphics[height=0.3\columnwidth]{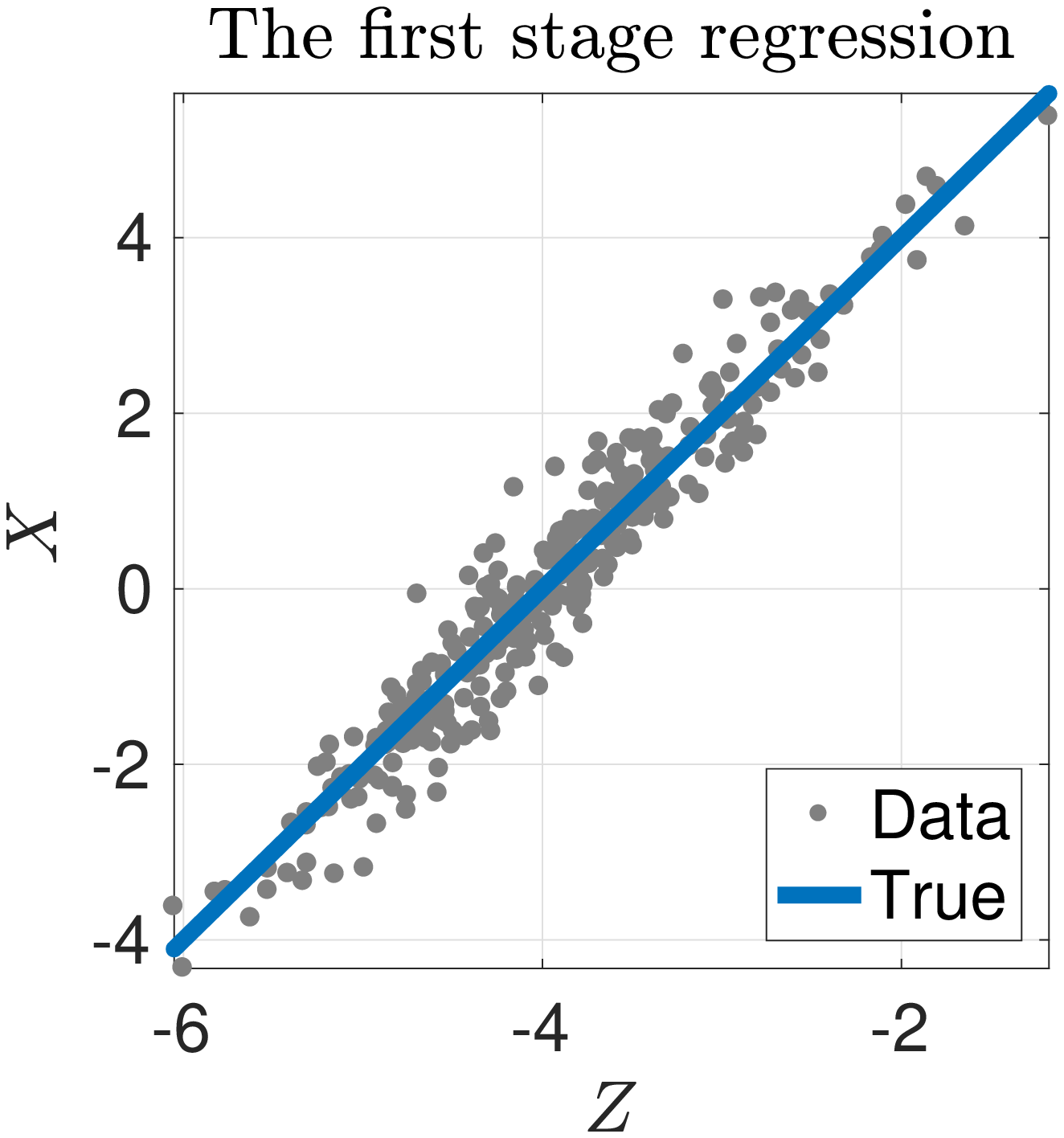}\hfill
    \includegraphics[height=0.3\columnwidth]{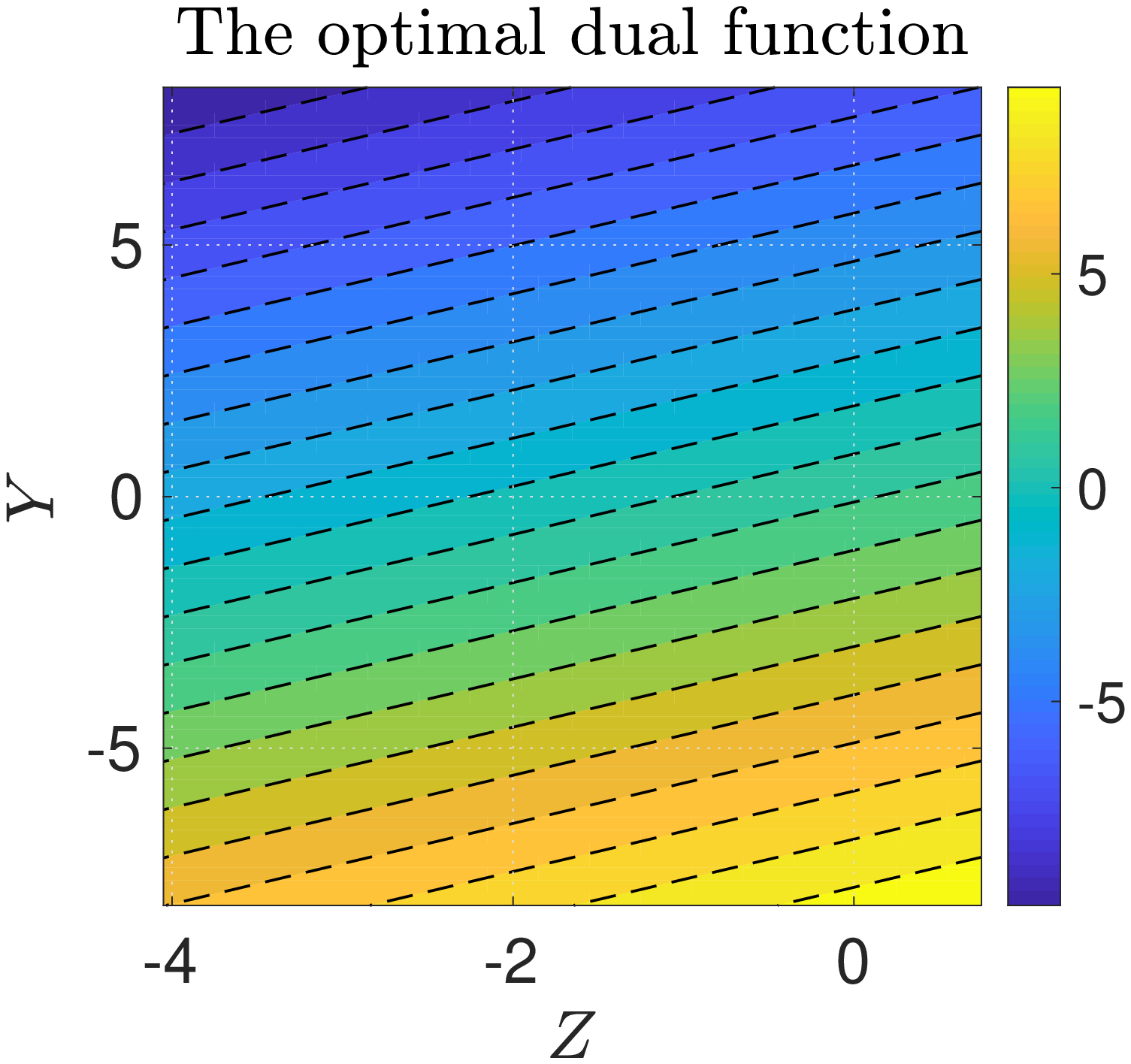}
    \caption{The dual function $u$ w.r.t. the current estimate $f$ in the linear setting \eqref{eq:linear-model}. For each $y$ and $z$, $u$ can directly measures the discrepancy between $y$ and $\ep_{X|z}[f(X)]$.}
    \label{fig:visualize-dual}
\end{figure}

As we can see, the optimal dual function $u^{*}(y,z) = \ep_{X|z}[f(X)]-y$ acts as a residual function measuring the discrepancy between $y$ and $\ep_{X|z}[f(X)]$ \citep{Dai17:Dual}.
Remarkably, this makes it possible to approximate $R(f)$ in \eqref{eq:erm} without computing the expectation $\ep_{X|z}[f(X)]$ explicitly.
We later exploit this property when performing hyperparameter selection.
Moreover, since $\ep_{X|z}[f(X)]$ allows $X$ and $Z$ to have a non-linear relationship, $u$ can be non-linear even when the true structural function $f^*$ is linear. 
This flexibility enables $u$ to accommodate a larger class of functions that maps $Z$ to $X$.
Figure \ref{fig:visualize-dual} illustrates this given the following generative process:
\begin{equation}\label{eq:linear-model}
    Y = X\beta + e + \epsilon, \quad X = (1-\rho)Z_1 + \rho e + \eta
\end{equation}
where $e \sim \mathcal{N}(0,2), Z_1\sim\mathcal{N}(0,2), \epsilon \sim \mathcal{N}(0,0.1)$, and $\eta\sim\mathcal{N}(0,0.1)$. 
The parameter $\rho$ controls the strength of the instrument w.r.t. hidden confounder $e$. Here, we set $n=300$, $\beta=0.7$, $\hat{\beta}=0.4$, and $\rho=0.2$ where $\hat{\beta}$ is an OLS estimate of $\beta$.
Under this model, we have $u^*(y,z) \approx \hat{\beta}(1-\rho)z - y$.

\section{Kernelized DualIV}
\label{sec:estimation}

    To demonstrate the effectiveness of our framework, we develop a simple kernel-based algorithm using the new formulation \eqref{eq:1sls-obj}.
    To simplify notation, we denote by $W := (Y,Z)$ a random variable taking value in $\mathcal{W} := \mathcal{Y}\times\mathcal{Z}$.
    We pick $\hbspf$ and $\hbspu$ to be reproducing kernel Hilbert spaces (RKHSs) associated with positive definite kernels $k:\inx\times\inx\to\rr$ and $l:\inw \times \inw \to\rr$, respectively.
    Let $\phi: x\mapsto k(x,\cdot)$ and $\varphi: w\mapsto l(w,\cdot)$ be the canonical feature maps \citep{Scholkopf01:LKS}.
    We assume both $\hbspf$ and $\hbspu$ are \emph{universal} and hence are dense in the space of bounded continuous functions \citep[Ch. 4]{SteChr2008}.

    Then, for any $f\in\hbspf$ and $u\in\hbspu$, we can rewrite the objective in \eqref{eq:1sls-obj} as
    \begin{align}\label{eq:rkhs-obj}
        \Psi(f,u) &= \ep_{\mathit{XW}}[f(X)u(W)] - \ep_{\mathit{YZ}}[Yu(Y,Z)] - \frac{1}{2}\ep_{W}[u(W)^2] \nonumber \\
         &= \langle \covwx f - \mathbf{b}, u\rangle_{\hbspu} - \frac{1}{2}\langle u,\covw u \rangle_{\hbspu} ,
    \end{align}
    \noindent where $\mathbf{b} := \ep_{\mathit{YZ}}[Y\varphi(Y,Z)]\in\hbspu$, $\covw := \ep_{W}[\varphi(W)\otimes\varphi(W)] \in\hbspu\otimes\hbspu$ is a covariance operator, and $\covwx := \ep_{\mathit{WX}}[\varphi(W)\otimes\phi(X)] \in \hbspu\otimes\hbspf$ is a cross-covariance operator \citep{Baker1973,Fukumizu04:DRS} (see Appendix \ref{sec:apx-kernelized-iv}). 
    Since \eqref{eq:rkhs-obj} is quadratic in $u$, we have $\covw u^* = \covwx f - \mathbf{b}$. 
    Substituting $u^*$ back into \eqref{eq:rkhs-obj} yields 
    \begin{align}\label{eq:close-form}
        f^* = \arg\min_{f\in\hbspf} \, \frac{1}{2}\left\langle \covwx f - \mathbf{b}, \covw^{-1}(\covwx f - \mathbf{b})\right\rangle_{\hbspu}
        = (\covxw\covw^{-1}\covwx)^{-1}\covxw\covw^{-1}\mathbf{b}.
    \end{align}
    We can view \eqref{eq:close-form} as a generalized least squares solution in RKHS.
    Since $\covw^{-1}$ and $(\covxw\covw^{-1}\covwx)^{-1}$ may not exist in general, we replace them with regularized versions $(\covw + \lambda_1\mathcal{I})^{-1}$ and $(\covxw\covw^{-1}\covwx + \lambda_2\mathcal{I})^{-1}$ where $\mathcal{I}$ is the identity operator and $\lambda_1,\lambda_2 > 0$ are regularization parameters.

    Given an i.i.d. sample $(x_i,y_i,z_i)_{i=1}^n$ from $\mathbb{P}(X,Y,Z)$, we define $\Phi := [\phi(x_1),\ldots,\phi(x_n)]$, $\Upsilon := [\varphi(y_1,z_1),\ldots,\varphi(y_n,z_n)]$, and $\y:=[y_1,\ldots,y_n]^\top$. 
    Then, we can estimate $\mathbf{b}$, $\covw$, and $\covxw$ with their empirical counterparts $\hat{\mathbf{b}} := n^{-1}\sum_{i=1}^n y_i\varphi(y_i,z_i) = n^{-1}\Upsilon\mathbf{y}$, $\ecovxw := n^{-1}\sum_{i=1}^n\phi(x_i)\otimes\varphi(y_i,z_i) = n^{-1}\Phi\Upsilon^\top$ and $\ecovw = n^{-1}\sum_{i=1}^n\varphi(y_i,z_i)\otimes\varphi(y_i,z_i) = n^{-1}\Upsilon\Upsilon^\top$.
    We denote the empirical version of \eqref{eq:rkhs-obj} by $\widehat{\Psi}(f,u)$ and the estimate of $f^*$ by $\hat{f}$.

    Next, we show that the representer theorem \citep{Schoelkopf01:Representer} for $\widehat{\Psi}(f,u)$ holds for both $f$ and $u$.
    \begin{restatable}{lemma}{representer}
    \label{lem:representer}
    For any $f\in\hbspf$ and $u\in\hbspu$, there exist 
    $f_{\bvec} = \sum_{i=1}^n\beta_i k(x_i,\cdot)$ and $u_{\avec} = \sum_{i=1}^n\alpha_i l(w_i,\cdot)$ for some $\avec,\bvec\in\rr^n$ such that $\widehat{\Psi}(f,u) = \widehat{\Psi}(f_{\bvec},u_{\avec})$.
    \end{restatable}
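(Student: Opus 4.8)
The plan is to prove this the way one proves any representer theorem: decompose $f$ and $u$ into the part lying in the span of the training feature maps and an orthogonal remainder, and show that $\widehat\Psi$ never sees the remainder. First I would write $\widehat\Psi$ out explicitly. Substituting the empirical operators $\ecovwx = n^{-1}\sum_{i=1}^n \varphi(w_i)\otimes\phi(x_i)$, $\ecovw = n^{-1}\sum_{i=1}^n \varphi(w_i)\otimes\varphi(w_i)$ and $\bh = n^{-1}\sum_{i=1}^n y_i\varphi(w_i)$ into the bilinear--quadratic form \eqref{eq:rkhs-obj}, and using the reproducing identities $\langle f,\phi(x_i)\rangle_{\hbspf}=f(x_i)$ and $\langle u,\varphi(w_i)\rangle_{\hbspu}=u(w_i)$, one obtains
\begin{equation*}
  \widehat\Psi(f,u) \;=\; \frac1n\sum_{i=1}^n \bigl(f(x_i)-y_i\bigr)\,u(w_i) \;-\; \frac1{2n}\sum_{i=1}^n u(w_i)^2 .
\end{equation*}
The structural fact I would emphasise here is that the right-hand side depends on $(f,u)$ \emph{only} through the evaluation vectors $\bigl(f(x_1),\dots,f(x_n)\bigr)$ and $\bigl(u(w_1),\dots,u(w_n)\bigr)$.

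Next I would introduce the finite-dimensional subspaces $\mathcal{F}_n := \mathrm{span}\{\phi(x_1),\dots,\phi(x_n)\}\subseteq\hbspf$ and $\mathcal{U}_n := \mathrm{span}\{\varphi(w_1),\dots,\varphi(w_n)\}\subseteq\hbspu$, and take the orthogonal decompositions $f = f_{\bvec}+f_\perp$, $u = u_{\avec}+u_\perp$ with $f_{\bvec}\in\mathcal{F}_n$, $f_\perp\perp\mathcal{F}_n$, $u_{\avec}\in\mathcal{U}_n$, $u_\perp\perp\mathcal{U}_n$. By definition of the subspaces there are coefficient vectors $\bvec\in\rr^n$, $\avec\in\rr^n$ with $f_{\bvec}=\sum_{i=1}^n\beta_i k(x_i,\cdot)$ and $u_{\avec}=\sum_{i=1}^n\alpha_i l(w_i,\cdot)$. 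Since $\langle f_\perp,\phi(x_i)\rangle_{\hbspf}=0$ and $\langle u_\perp,\varphi(w_i)\rangle_{\hbspu}=0$ for every $i$, the reproducing property gives $f(x_i)=f_{\bvec}(x_i)$ and $u(w_i)=u_{\avec}(w_i)$ for all $i$; substituting these equalities into the displayed expression for $\widehat\Psi$ yields $\widehat\Psi(f,u)=\widehat\Psi(f_{\bvec},u_{\avec})$, which is the claim.

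I do not anticipate a genuine obstacle: the argument is elementary once $\widehat\Psi$ is written in terms of point evaluations, and the only delicate point is that, \emph{as stated}, $\widehat\Psi$ carries no Tikhonov terms, so the displayed equality holds for every $(f,u)$. If instead one uses the regularized objective behind the closed form \eqref{eq:close-form} --- adding $\tfrac{\lambda_2}{2}\|f\|_{\hbspf}^2$ to the minimization over $f$ and $-\tfrac{\lambda_1}{2}\|u\|_{\hbspu}^2$ to the maximization over $u$ --- the decomposition only yields $\|f_{\bvec}\|_{\hbspf}\le\|f\|_{\hbspf}$ and $\|u_{\avec}\|_{\hbspu}\le\|u\|_{\hbspu}$, so the conclusion must be read at the saddle point: the minimizing $f$ has $f_\perp=0$ because its norm penalty is nonnegative, and the maximizing $u$ has $u_\perp=0$ because its norm penalty is nonpositive, hence the saddle point still lies in $\mathcal{F}_n\times\mathcal{U}_n$. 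I would present the lemma in the clean unregularized form and note this adaptation in a remark.
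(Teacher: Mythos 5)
Your proof is correct and takes essentially the same route as the paper: both rest on the orthogonal decomposition of $\hbspf$ and $\hbspu$ into the span of the training feature maps and its complement, with the complement annihilated by the reproducing property. Writing $\widehat\Psi$ explicitly as a function of the evaluation vectors $(f(x_i))_i$ and $(u(w_i))_i$ is just a slightly more direct phrasing of the paper's operator-level manipulation, and your closing remark about the regularized objective is a sensible (if not strictly needed) addendum.
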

    
    By virtue of Lemma \ref{lem:representer}, the solution to \eqref{eq:close-form} can be expressed as $f(x) = \sum_{i=1}^n\beta_ik(x_i,x)$ where the coefficients $\bvec$ are given by the following proposition. 
    
    \begin{restatable}{proposition}{empiricalestimate}
    \label{prop:empirical}
    Given an i.i.d. sample $(x_i,y_i,z_i)_{i=1}^n$ from $\mathbb{P}(X,Y,Z)$, let $\kmat := \Phi^\top\Phi$ and $\lmat := \Upsilon^\top\Upsilon$ be the Gram matrices such that $\kmat_{ij} = k(x_i,x_j)$ and $\lmat_{ij} = l(w_i,w_j)$ where $w_i := (y_i,z_i)$.
    Then, $\hat{f} = \Phi\bvec$ where $\bvec = (\M\kmat + n\lambda_2 \kmat)^{-1}\M\y$ and $\M := \kmat(\lmat + n\lambda_1 I)^{-1}\lmat$.
    \end{restatable}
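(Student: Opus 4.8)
The plan is to reduce the infinite-dimensional regularized problem to a finite-dimensional saddle-point problem via the representer theorem, solve the inner maximization in closed form, and then solve the resulting outer quadratic minimization. To set up, note that inserting the regularizers $(\ecovw+\lambda_1\Id)^{-1}$ and $(\ecovxw(\ecovw+\lambda_1\Id)^{-1}\ecovwx+\lambda_2\Id)^{-1}$ into the empirical version of \eqref{eq:close-form} is the same as taking $\hat f$ to be the saddle point of
\[
  \min_{f\in\hbspf}\max_{u\in\hbspu}\; \widehat{\Psi}_\lambda(f,u):=\langle \ecovwx f-\bh,\,u\rangle_{\hbspu}-\tfrac12\langle u,\ecovw u\rangle_{\hbspu}-\tfrac{\lambda_1}{2}\|u\|_{\hbspu}^2+\tfrac{\lambda_2}{2}\|f\|_{\hbspf}^2,
\]
because maximizing over $u$ gives $(\ecovw+\lambda_1\Id)u^\star=\ecovwx f-\bh$ and resubstituting reproduces the regularized \eqref{eq:close-form}. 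By (the obvious extension of) Lemma \ref{lem:representer} — decomposing $f$ and $u$ into their projections onto $\mathrm{span}\{\phi(x_i)\}$ and $\mathrm{span}\{\varphi(w_i)\}$ and observing that the data terms see only the projections while the norm penalties are monotone in the orthogonal parts — the saddle point lies in these spans, so it suffices to take $f=\Phi\bvec$ and $u=\Upsilon\avec$ with $\avec,\bvec\in\rr^n$.

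Next I would substitute $\ecovwx=n^{-1}\Upsilon\Phi^\top$, $\ecovw=n^{-1}\Upsilon\Upsilon^\top$, $\bh=n^{-1}\Upsilon\y$, together with $\Phi^\top\Phi=\kmat$, $\Upsilon^\top\Upsilon=\lmat$, $\|\Phi\bvec\|_{\hbspf}^2=\bvec^\top\kmat\bvec$, $\|\Upsilon\avec\|_{\hbspu}^2=\avec^\top\lmat\avec$, to obtain the finite-dimensional objective
\[
  J(\bvec,\avec)=\tfrac1n\avec^\top\lmat(\kmat\bvec-\y)-\tfrac1{2n}\avec^\top\lmat(\lmat+n\lambda_1 I)\avec+\tfrac{\lambda_2}{2}\bvec^\top\kmat\bvec,
\]
which is concave in $\avec$ and convex in $\bvec$. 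Solving the inner maximization: $\nabla_\avec J=0$ reads $\lmat[(\lmat+n\lambda_1 I)\avec-(\kmat\bvec-\y)]=0$, a solution of which is $\avec^\star=(\lmat+n\lambda_1 I)^{-1}(\kmat\bvec-\y)$; since $J$ depends on $\avec$ only through $\Upsilon\avec$, the optimal value does not depend on which solution is chosen. Plugging $\avec^\star$ back and using that $\lmat$ commutes with $(\lmat+n\lambda_1 I)^{-1}$, the two $\avec$-dependent terms collapse to $\tfrac1{2n}(\kmat\bvec-\y)^\top(\lmat+n\lambda_1 I)^{-1}\lmat(\kmat\bvec-\y)$, so with $\M:=\kmat(\lmat+n\lambda_1 I)^{-1}\lmat$,
\[
  \max_{\avec}J(\bvec,\avec)=\tfrac1{2n}\big(\bvec^\top\M\kmat\bvec-2\bvec^\top\M\y\big)+\tfrac{\lambda_2}{2}\bvec^\top\kmat\bvec+\text{const}.
\]

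Finally, I would solve the outer minimization. Because $\lmat(\lmat+n\lambda_1 I)^{-1}$ is symmetric positive semidefinite, $\M\kmat=\kmat(\lmat+n\lambda_1 I)^{-1}\lmat\kmat$ is symmetric and positive semidefinite, so the reduced objective is convex; setting its gradient $\tfrac1n(\M\kmat\bvec-\M\y)+\lambda_2\kmat\bvec$ to zero and multiplying by $n$ yields $(\M\kmat+n\lambda_2\kmat)\bvec=\M\y$, i.e. $\bvec=(\M\kmat+n\lambda_2\kmat)^{-1}\M\y$, and therefore $\hat f=\Phi\bvec=\sum_{i=1}^n\beta_i k(x_i,\cdot)$, which is the claim.

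The main obstacle is purely bookkeeping around the fact that the Gram matrices $\kmat,\lmat$ need not be invertible: the ``cancellation'' of $\lmat$ in the inner first-order condition, and the step identifying $\kmat(\lmat+n\lambda_1 I)^{-1}\lmat\kmat$ with $\M\kmat$ and establishing its symmetry, must be justified by exhibiting a particular stationary $\avec^\star$ (equivalently, by working with $\widehat{\Psi}_\lambda$ as a function of the RKHS elements $\Phi\bvec,\Upsilon\avec$ rather than of the coefficient vectors) and checking the objective value is independent of that choice; it is exactly the commutativity of $\lmat$ and $(\lmat+n\lambda_1 I)^{-1}$ that makes $\M\kmat$ symmetric and hence the final normal equation well-posed.
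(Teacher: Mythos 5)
Your derivation is correct and arrives at the same normal equation $(\M\kmat+n\lambda_2\kmat)\bvec=\M\y$, but by a different route than the paper. The paper starts directly from the regularized operator equation $(\ecovxw(\ecovw+\lambda_1\Id)^{-1}\ecovwx+\lambda_2\Id)\hat f=\ecovxw(\ecovw+\lambda_1\Id)^{-1}\bh$, pushes $\Upsilon^\top$ through the inverse via the identity $\Upsilon^\top(\Upsilon\Upsilon^\top+n\lambda_1\Id)^{-1}=(\lmat+n\lambda_1 I)^{-1}\Upsilon^\top$, writes $\hat f=\Phi\bvec$ by Lemma \ref{lem:representer}, and left-multiplies by $\Phi^\top$; it never writes down a finite-dimensional objective. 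You instead exhibit the Tikhonov-regularized functional $\widehat{\Psi}_\lambda$ whose saddle point is the regularized estimator, reduce it to an $n$-dimensional concave--convex quadratic in $(\avec,\bvec)$, and carry out the inner maximization and outer minimization explicitly in coordinates. The two computations hinge on the same commutation fact: your commutativity of $\lmat$ with $(\lmat+n\lambda_1 I)^{-1}$ is exactly the coordinate form of the paper's push-through identity. What your version buys: the variational structure is explicit, the roles of $\lambda_1$ and $\lambda_2$ as penalties on $u$ and $f$ are transparent (which also yields a slightly cleaner representer argument, since the penalties strictly prefer the spans rather than merely leaving the value unchanged), and you handle the possible singularity of $\kmat$ and $\lmat$ carefully, which the paper glosses over when it cancels $\Phi$ at the end. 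What the paper's version buys is brevity. The only caveat, which you share with the statement itself rather than introducing, is that passing from the normal equation to $\bvec=(\M\kmat+n\lambda_2\kmat)^{-1}\M\y$ still requires $\M\kmat+n\lambda_2\kmat$ to be invertible; your symmetry and positive-semidefiniteness observations guarantee solvability of the normal equation (and uniqueness of $\Phi\bvec$), but not uniqueness of $\bvec$ when $\kmat$ is singular.
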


    Compared to previous work which involved conditional density estimation \citep{Hall05:NIV,Darolles11:NIV,Hartford17:DIV} and vector-valued regression \citep{Singh19:KIV} as first-stage regression, estimating the dual function $u$, a real-valued function, is arguably easier.
    This is especially so when $X$ and $Z$ are high-dimensional.

\begin{algorithm}[t]
    \caption{Kernelized DualIV}
    \label{alg:kernel-dualiv}
    \begin{algorithmic}[1]
        \REQUIRE Data $(x_i,y_i,z_i)_{i=1}^n$, kernel functions $k,l$, and a parameter grid $\Gamma$.
        \STATE Compute kernel matrices $\kmat_{ij}=k(x_i,x_j)$ and $\lmat_{ij} = l((y_i,z_i),(y_j,z_j))$.
        \STATE $(\lambda_1,\lambda_2) \leftarrow \text{\texttt{SelectParams}}(\kmat,\lmat,\Gamma)$.
        \STATE $\M \leftarrow \kmat(\lmat + n\lambda_1 I)^{-1}\lmat$.
        \STATE $\bvec \leftarrow (\M\kmat + n\lambda_2 \kmat)^{-1}\M\y$.
        \ENSURE $f(x) = \sum_{i=1}^n\beta_i k(x_i,x)$.
    \end{algorithmic}
\end{algorithm}

\vspace{-5pt}
\paragraph{Hyperparameter selection.}
Our estimator depends on two hyper-parameters, $\lambda_1$ and $\lambda_2$.
Given a dataset $(x_i,y_i,z_i)_{i=1}^{2n}$ of size $2n$, we provide a simple heuristic to determine the values of $(\lambda_1,\lambda_2)$.
Ideally, if we know the optimal dual function $u^*$, we can interpret $u^*(y,z)^2$ as a loss function of $f^*$ at $(y,z)$, as discussed in Section \ref{sec:identifiability}. 
To this end, we first estimate $\hat{f}$ via Proposition \ref{prop:empirical} and $\hat{u}_{\lambda} := (\ecovw + \lambda\mathcal{I})^{-1}(\ecovwx\hat{f} - \hat{\mathbf{b}})$ on the first half of the data $(x_i,y_i,z_i)_{i=1}^n$. 
Next, the out-of-sample loss of $\hat{f}$ is evaluated on the second half $(x_i,y_i,z_i)_{i=n+1}^{2n}$ by
\begin{equation}\label{eq:oos-loss}
\widehat{R}(\hat{f}) = \frac{1}{n}\sum_{i=n+1}^{2n} (\ep_{X|z_i}[\hat{f}(X)] - y_i)^2 \approx \frac{1}{n}\sum_{i=n+1}^{2n} \hat{u}_{\lambda}(y_i,z_i)^2.
\end{equation}
Note that $\hat{u}_{\lambda} = \Upsilon(\lmat + n\lambda I)^{-1}(\kmat\bvec - \y) = \Upsilon\avec$ where $\avec := (\lmat + n\lambda I)^{-1}(\kmat\bvec - \y)$ and $\kmat,\lmat \in\rr^{n\times n}$ are kernel matrices evaluated on $(x_i,y_i,z_i)_{i=1}^n$.
Hence, $\widehat{R}(\hat{f}) \approx \avec^\top\widetilde{\lmat}\mathbf{1}/n$ where $\widetilde{\lmat}_{ij}=l((y_i,z_i),(y_j,z_j))$ for $i=1,\ldots,n$ and $j=n+1,\ldots,2n$ and $\mathbf{1}$ is the all-ones column vector.
In practice, we fix $\lambda$ in $\hat{u}_{\lambda}$ to a small constant to stabilize the loss \eqref{eq:oos-loss} and only optimize $(\lambda_1,\lambda_2)$ that appear in $\bvec$.
Note that this procedure differs from the two-stage causal validation procedures used in \citep{Hartford17:DIV,Singh19:KIV}.
Alternatively, one may choose the hyperparameters by cross-validation with respect to \eqref{eq:oos-loss}.

Algorithm \ref{alg:kernel-dualiv} outlines the kernelized DualIV method whose consistency is studied in Appendix \ref{sec:consistency}.
We note that above algorithm involves matrix inversions ($\mathcal{O}(n^3)$) which become the primary computational bottlenecks when scaling to large datasets.
To improve the scalability of our algorithm, we can leverage the rich literature on large-scale kernel machines such as random Fourier features and Nystr\"om method; see, e.g., \citet{NIPS2012_4588} and references therein.
Alternatively, we can employ stochastic gradient descent-based (SGD) algorithms similar to those proposed in \citet[Algorithm 1]{Dai17:Dual} to solve the dual formulation \eqref{eq:1sls-obj} directly. 
This would also allow us to employ flexible models such as neural networks to parameterize the function classes $\mathcal{F}$ and $\mathcal{U}$. 
Recently, \citet{Liao20:NeuralSEM} has taken an important step in this direction and provided convergence analysis for neural networks under a similar formulation.

\section{Experiments}
\label{sec:experiments}

In this section, we compare kernelized DualIV\footnote{Our implementation is available at \url{https://github.com/krikamol/DualIV-NeurIPS2020}.} with:
\begin{enumerate*}[label=(\roman*)]
\item vanilla two-stage least squares (2SLS) \citep{Angrist96:IV},
\item DeepIV \citep{Hartford17:DIV},
\item KernelIV \citep{Singh19:KIV} and
\item DeepGMM \citep{Bennett19:DeepGMM}. 
\end{enumerate*}
To provide a fair comparison, we adhered to the provided hyperparameter settings. Given the low-dimensional nature of our experiments, we used DeepGMM's settings for low-dimensional scenarios in \citep[Appendix B.2.1]{Bennett19:DeepGMM}. We ran 20 simulations of each algorithm for sample sizes of $50$ and $1000$ and calculated the $\log_{10}$ mean squared error and its standard deviations w.r.t. the true function $f$ for 2800 out-of-sample test points.


\vspace{-5pt}
\paragraph{Demand design.}
We consider the same simulation as in \citep{Hartford17:DIV,Singh19:KIV}: $Y = f(X) + \varepsilon$ where $Y$ is outcome, $X=(P,T,S)$ are inputs, and $Z=(C,T,S)$ are instruments.
Specifically, $Y$ is sales, $P$ is price, which is endogeneous, $C$ is a supply cost shifter (instrument), and $(T,S)$ are time of year and customer sentiment acting as exogeneous variables.
The aim is to estimate the demand function $f(p,t,s) = 100 + (10+p)s\psi(t)-2p$ where $\psi(t) = 2[(t-5)^4/600 + \exp(-4(t-5)^2)+t/10 -2]$. 
Training data is sampled according to $S\sim \text{Unif}\{1,\ldots,7\}$, $T\sim\text{Unif}[0,10]$, $(C,V)\sim\mathcal{N}(\mathbf{0},I_2)$, $\varepsilon \sim\mathcal{N}(\rho V, 1-\rho^2)$, and $P = 25 + (C+3)\psi(T)+V$.
The parameter $\rho\in\{0.1,0.25,0.5,0.75,0.9\}$ controls the extent to which price $P$ is confounded with outcome $Y$ by supply-side market forces. 
In our notation, $X=(P,T,S)$, $Z=(C,T,S)$ and $W=(Y,Z)=(Y,C,T,S)$. 

For DualIV, we used the Gaussian RBF kernel for both $k$ and $l$.
In the experiments, the kernels on $X$ and $Z$ are product kernels, \ie, $k(x_i,x_j) = k_p(p_i,p_j)k_t(t_i,t_j)k_s(s_i,s_j)$ and $k(z_i,z_j) = k_c(c_i,c_j)k_t(t_i,t_j)k_s(s_i,s_j)$, and $l([y_i, z_i],[y_j, z_j])=\exp([y_i-y_j, z_i - z_j]^\top V_{yz}^{-1}[y_i-y_j, z_i - z_j])$ where $V_{yz}$ is a symmetric bandwidth matrix. 
The values of all bandwidth parameters are determined via the median heuristic.
We choose $(\lambda_1, \lambda_2)$ from $\{10^{-10},10^{-9},\ldots,10^{-1}\}$ via cross-validation.
Once $(\lambda_1, \lambda_2)$ is chosen, we refit $\hat{f}$ on the entire training set.

\begin{table}[t!]
    \centering
    \caption{Comparisons of IV regression methods in small (top) and medium (bottom) sample size regimes. We report the $\log_{10}$ mean squared error (MSE) and its standard deviations over 20 trials.}
    \label{tab:results}
    \resizebox{\textwidth}{!}{
    \begin{tabular}{lccccc}
        \toprule
         \multirow{2}{*}{$n=50$} & \multicolumn{5}{c}{\textbf{$\mathrm{Log}_{10}$ Mean Squared Error (MSE)}} \\
         & $\rho=0.1$ & $\rho=0.25$ & $\rho=0.5$ & $\rho=0.75$ & $\rho=0.9$ \\
         \midrule
         \;\;\texttt{2SLS} &$5.814\pm1.214$&$6.013\pm0.827$&$5.895\pm 0.718$&$5.625\pm1.182$&$5.308\pm1.031$ \\
         \;\;\texttt{DeepIV} &$5.127\pm0.043$&$5.131\pm0.031$&$5.133\pm0.072$&$5.130\pm0.124$&$5.127\pm 0.061$ \\
         \;\;\texttt{KernelIV} &$4.481\pm 0.134$ & $4.460\pm 0.095$ & $4.438\pm 0.132$ & $4.433\pm 0.100$ & $4.462\pm 0.114$ \\
         \;\;\texttt{DeepGMM} &$3.848\pm 1.096$ & $2.899\pm 1.638$ & $3.952\pm 0.900$ & $4.148\pm 0.556$ & $3.738\pm 0.587$ \\
         \;\;\texttt{DualIV} &$4.257\pm 0.108$ & $4.210\pm 0.126$ & $4.285\pm 0.170$ & $4.286\pm 0.126$ & $4.232\pm 0.152$ \\
         \midrule
         $n=1000$ & & & & & \\
         \;\;\texttt{2SLS} & $8.236\pm 0.117$ & $7.242\pm 1.232$ & $8.290\pm 1.132$ & $8.371\pm 0.865$&$8.544\pm 1.109$ \\
         \;\;\texttt{DeepIV} &$4.613\pm0.052$&$4.618\pm0.048$&$4.614\pm0.068$&$4.701\pm0.040$&$4.731\pm0.032$ \\
         \;\;\texttt{KernelIV} &$4.189\pm 0.046$ & $4.209\pm 0.040$ & $4.199\pm 0.043$ & $4.195\pm 0.045$ & $4.194\pm 0.055$ \\
         \;\;\texttt{DeepGMM} &$4.090\pm 0.691$ & $3.953\pm 1.076$ & $4.392\pm 0.561$ & $4.272\pm 0.595$ & $4.415\pm 0.522$ \\
         \;\;\texttt{DualIV} &$4.143\pm 0.117$& $4.221\pm 0.185$ & $4.104\pm 0.102$ & $4.142\pm 0.105$ & $4.127\pm 0.106$ \\
         \bottomrule
    \end{tabular}}
\end{table}

Table \ref{tab:results} reports the results of different methods evaluated on the test data.
First, we observe that 2SLS achieves the largest MSE in both regimes as expected because the linearity assumption is violated here. 
Second, in the small sample size regime, DeepIV achieves relatively larger MSE than the other non-linear methods. 
KernelIV, DeepGMM, and DualIV, on the other hand, have comparable performance, with DeepGMM having the lowest MSE. 
However, we note that the results attained by DeepGMM were unstable out of the box and we had to reduce the variance of the initialization of the neural networks to $0.1$ to obtain some degree of stability which is reflected in the standard deviations. 
We can fully attribute this variability to initialization as DeepGMM's default batch size of 1024 is larger than that of both training datasets so there is no sampling variability in the optimization process. 
This suggests that DeepGMM, like DeepIV, is relatively brittle compared to kernel-based methods in the small sample size regime. 
Furthermore, DeepGMM comes with an extensive hyperparameter selection process, which highlights its need for fine-tuning. 
Last but not least, DualIV is competitive to KernelIV across $\rho$ with slightly smaller MSE, which lends weight to our hypothesis that estimating the real-valued dual function is easier than vector-valued regression.

In the medium sample size regime, we observe that performance of DeepIV is in the same ballpark as the rest of the non-linear IV regression methods and the variance of DeepGMM is reduced, albeit still highest among the non-linear methods. 
The results of DualIV, KernelIV and DeepGMM are almost indistinguishable with DualIV having an edge as $\rho$ increases. 
This could mean accounting for both $Y$ and $Z$ is perhaps slightly more effective than $Z$ alone in the presence of greater confounding.

\section{Conclusion}
\label{sec:discussion}

This paper proposes a general framework for non-linear IV regression called DualIV.
Unlike previous work, DualIV does not require the first-stage regression which is the critical bottleneck of modern two-stage procedures.
By exploiting tools in stochastic programming, we were able to reformulate the two-stage problem as the convex-concave saddle-point problem which is relatively simpler to solve.
Instead of first-stage regression, DualIV requires the dual function $u(y,z)$ to be estimated, which is arguably easier than first-stage regression, especially when the instruments and treatments are high-dimensional.
We demonstrate the validity of our framework with a kernel-based algorithm.
Results show the competitiveness of our algorithm with respect to existing ones.
Finally, potential directions for future work include
\begin{enumerate*}[label=(\roman*)]
\item a minimax convergence analysis which could provide additional insight into the benefits of our framework,
\item more flexible and scalable models such as deep neural networks as dual functions with stochastic gradient descent (SGD) \citep{Dai17:Dual}, and
\item applications to other two-stage problems in causal inference such as double ML \citep{Chernozhukov18:DML}.
\end{enumerate*}

\section*{Broader impact}

This work provides a new framework for non-linear instrumental variable regression which allows one to perform causal analysis under the presence of unobserved confounders.
This could have a profound impact in other fields such as economics, social science, and epidemiology, among others.
Understanding the role of instruments in the context of learning theory may also pave the way towards creating more robust and trustworthy machine learning algorithms that are capable of surviving in the world full of hidden biases.

\begin{ack}
We are indebted to Rahul Singh and Arthur Gretton for their help with the KernelIV code used in our experiments. 
We thank Victor Chernozhukov, Elias Bareinboim, Sorawit Saengkyongam, Uri Shalit, Konrad Kording, Rahul Singh, Arthur Gretton, and You-Lin Chen for fruitful discussions as well as anonymous reviewers for the helpful feedback on our initial submission.

This work was funded by the federal and state governments of Germany through the Max Planck Society (MPG).
\end{ack}

\bibliography{refs}
\bibliographystyle{unsrtnat}

\clearpage
\appendix

\section{Conjugate loss function}
\label{sec:conjugate-loss}

Let $\ell_y(v) := \frac{1}{2}(y-v)^2$ be a proper, convex, and lower semi-continuous function for all $y\in\mathbb{R}$. It follows from the definition of Fenchel conjugate (see, \eg, Definition \ref{def:fenchel} or \citep[Ch. 14]{Rock98:VA} and \citep[Ch. 7]{Shapiro14:LSP}) that for any $y\in\mathbb{R}$,
\begin{equation}\label{eq:apx:dual-loss}
    \ell^{\star}_y(u) := \sup\left\{ uv - \ell_y(v) \,:\, v\in\mathbb{R} \right\}
    = \sup\left\{ uv - \frac{1}{2}(y-v)^2 \,:\, v\in\mathbb{R} \right\}.
\end{equation}
Hence, $\ell^{\star}_y(u)$ is also a proper, concave, and upper semi-continuous function. Taking a derivative of $uv - \frac{1}{2}(y-v)^2$ w.r.t. $v$ and setting it to zero yield a critical point $v^* = u+y$ for any $u,y\in\mathbb{R}$. Since $uv - \frac{1}{2}(y-v)^2$ is a concave function in $v$, we can substituting $v^*$ back into \eqref{eq:apx:dual-loss} to obtain
\begin{equation*}
    \ell^{\star}_y(u)  = u(u+y) - \frac{1}{2}(y-(u+y))^2 
    = u^2 + uy - \frac{1}{2}u^2 = uy + \frac{1}{2}u^2,
\end{equation*}
as required.

\section{Connection to generalized method of moments (GMM)}
\label{sec:gmm-is-dualiv}

To understand the connection between DualIV and GMM,
let us consider $\mathcal{U} := \text{span}(g_1,\ldots,g_m)$ where $g_1,\ldots,g_m$ are arbitrary real-valued functions on $\iny\times\inz$. 
That is, for any $u\in\mathcal{U}$, we have $u = \sum_{j=1}^m\alpha_jg_j$ for some $(\alpha_1,\ldots,\alpha_m)^\top\in\mathbb{R}^m$. Then, we have 
\begin{eqnarray*}
    J(f) &:=& \max_{u\in\mathcal{U}} \Psi(f,u) \\
    &=& \max_{\bm{\alpha}\in\mathbb{R}^m}\, \mathbb{E}_{\mathit{XYZ}}\left[(f(X)-Y)\left( \sum_{j=1}^m\alpha_jg_j(Y,Z)\right)\right] 
    - \frac{1}{2}\mathbb{E}_{\mathit{YZ}}\left[\left(\sum_{j=1}^m\alpha_jg_j(Y,Z)\right)^2\right] \\
        &=& \max_{\bm{\alpha}\in\mathbb{R}^m}\, \sum_{j=1}^m\alpha_j\mathbb{E}_{\mathit{XYZ}}\left[(f(X)-Y)g_j(Y,Z)\right] - \frac{1}{2}\mathbb{E}_{\mathit{YZ}}\left[\left(\sum_{j=1}^m\alpha_jg_j(Y,Z)\right)^2\right] \\
        &=& \max_{\bm{\alpha}\in\mathbb{R}^m}\, \bm{\alpha}^\top\bm{\psi} - \frac{1}{2}\bm{\alpha}^\top\Lambda\bm{\alpha},
\end{eqnarray*}
\noindent where we define $\bm{\alpha}:=(\alpha_1,\ldots,\alpha_m)^\top$, $\bm{\psi}:=(\psi(f,g_1),\ldots,\psi(f,g_m))^\top$, and $\Lambda := \mathbb{E}_{\mathit{YZ}}[\mathbf{g}(Y,Z)\otimes\mathbf{g}(Y,Z)]$ with $\mathbf{g}(Y,Z) := (g_1(Y,Z),\ldots,g_m(Y,Z))^\top$. Taking the derivative w.r.t. $\bm{\alpha}$ and setting it to zero yield
\begin{equation}\label{eq:dual-gmm-obj}
    J(f) = \frac{1}{2}\bm{\psi}^\top\Lambda^{-1}\bm{\psi}.
\end{equation}
In this case, the DualIV objective can be expressed in a quadratic form. 
In the language of GMM, $\bm{\psi}$ acts as a vector of moment conditions and $\Lambda$ acts as a weighting matrix \citep{Hansen82:GMM,Hall05:GMM}.
However, we reiterate that there is a fundamental difference here: the dual function $u\in\mathcal{U}$ cannot act as an instrument since it depends on $Y$ and thereby violates the exclusion restriction assumption.
Recently, \citet[Appendix F]{Liao20:NeuralSEM} provided a clarification on this connection by employing an alternative formulation.

\section{Dual formulation in RKHS}
\label{sec:apx-kernelized-iv}

In this section, we provide a detailed derivation of the dual formulation \eqref{eq:rkhs-obj} when $\hbspf$ and $\hbspu$ are both reproducing kernel Hilbert spaces (RKHSs) associated with positive definite kernels $k:\inx\times\inx\to\rr$ and $l:\inw\times\inw\to\rr$.
Let $\phi: x\mapsto k(x,\cdot)$ and $\varphi: w \mapsto l(w,\cdot)$ be the canonical feature maps of $k$ and $l$, respectively \citep{Scholkopf01:LKS}.
We assume throughout that both $\hbspf$ and $\hbspu$ are universal such that they are both dense in the space of bounded continuous functions (see, \eg, \citep[Ch. 4]{SteChr2008}).
Furthermore, let $\text{HS}(\hbspf,\hbspu)$ be a Hilbert space of Hilbert-Schmidt operators mapping from $\hbspf$ to $\hbspu$ with an inner product $\langle \cdot,\cdot\rangle_{\mathrm{HS}}$ (see, \eg, \citep[Sec. 2.3]{Muandet17:KME}).
    
Then, for any $f\in\hbspf$ and $u\in\hbspu$, we can rewrite the objective in \eqref{eq:1sls-obj} as a functional
\begin{eqnarray*}
        \Psi(f,u) &=& \ep_{\mathit{XW}}[f(X)u(W)] - \ep_{\mathit{YZ}}[Yu(Y,Z)]  - \frac{1}{2}\ep_{W}[u(W)^2] \\
        &=& \ep_{\mathit{XW}}[\langle f,\phi(X)\rangle_{\hbspf}\langle u,\varphi(W)\rangle_{\hbspu}] - \ep_{\mathit{YZ}}[Y\langle u,\varphi(Y,Z) \rangle_{\hbspu}]
        - \frac{1}{2}\ep_{W}[\langle u,\varphi(W) \rangle_{\hbspu}^2] \\
        &=& \ep_{\mathit{XW}}[\langle f\otimes u,\phi(X)\otimes\varphi(W)\rangle_{\text{HS}}] - \langle u,\ep_{\mathit{YZ}}[Y\varphi(Y,Z)] \rangle_{\hbspu}  \\
        && - \frac{1}{2}\ep_{W}[\langle u\otimes u,\varphi(W)\otimes \varphi(W) \rangle_{\text{HS}}] \\
        &=& \langle \covxw,f\otimes u\rangle_{\text{HS}} - 
        \langle u,\mathbf{b}\rangle_{\hbspu} 
        - \frac{1}{2}\langle \covw,u\otimes u\rangle_{\text{HS}} \\
        &=& \langle f, \covxw u\rangle_{\hbspf} - \langle u,\mathbf{b}\rangle_{\hbspu} - \frac{1}{2}\langle u,\covw u \rangle_{\hbspu},
\end{eqnarray*}
\noindent where $\mathbf{b} := \ep_{\mathit{YZ}}[Y\varphi(Y,Z)]\in\hbspu$, $\covw := \ep_{\mathit{W}}[\varphi(W)\otimes\varphi(W)] \in\hbspu\otimes\hbspu$ is a covariance operator, and $\covxw := \ep_{\mathit{XW}}[\phi(X)\otimes\varphi(W)] \in \hbspf\otimes\hbspu$ is a cross-covariance operator \citep{Baker1973,Fukumizu04:DRS}.
We used the reproducing property of $\hbspf$ and $\hbspu$ in the second equality.
The third equality follows from the property of the rank-one operator, \ie, 
$$\langle f,g\rangle_{\hbspf}\langle v,u \rangle_{\hbspu} = \langle f\otimes v, u \otimes g \rangle_{\text{HS}(\hbspf,\hbspu)}.$$
We then used the definition of $\covw$, $\covxw$, and $\mathbf{b}$ to get the fourth equation.
The last equation follows from the fact that $\langle L, f\otimes u\rangle_{\text{HS}(\hbspu,\hbspf)} = \langle f, Lu\rangle_{\hbspf}$ for any Hilbert-Schmidt operator $L \in \text{HS}(\hbspu,\hbspf)$, $f\in\hbspf$, and $u\in\hbspu$.

\section{Consistency} 
\label{sec:consistency}

In this section, we show that the kernelized DualIV estimator is asymptotically consistent under the assumption that $\covw^{-1}$ and $(\covxw \covw^{-1}\covwx)^{-1}$ exist. 
Under these assumptions, we show in \eqref{eq:close-form} that the solution $f^*$ of the saddle-point problem \eqref{eq:rkhs-obj} can be expressed as 
$f^* = (\covxw \covw^{-1}\covwx)^{-1}\covxw\covw^{-1}\mathbf{b}$. 
For simplicity, we assume further that the operator norm of the inverse covariance functions are bounded from below. 
The following theorem shows the consistency for the estimator $\hat{f}$ obtained via Proposition \ref{prop:empirical}.

\begin{restatable}{theorem}{consistency}
\label{thm:consistency}
    Let $\hat{f}_{\lambda}$ be an empirical estimator of $f^*$ obtained from Proposition \ref{prop:empirical} with the regularization parameters $\lambda := (\lambda_1,\lambda_2)$.
    Assume that $\covw^{-1}$ and $(\covxw \covw ^{-1}\covwx)^{-1}$ exist and the operator norm of the inverse are bounded. 
    Then, for sufficiently slow decay of regularization parameters $\lambda_1$ and $\lambda_{2}$, $\hat{f}_{\lambda}$ is a consistent estimator of $f^*$ in RKHS norm, \ie, $\|\hat{f}_{\lambda} - f^* \|_{\hbspf} \rightarrow 0$ as $n\rightarrow \infty$.
\end{restatable}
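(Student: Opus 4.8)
The plan is to prove consistency of $\hat{f}_\lambda$ by a standard decomposition into an \emph{approximation error} (bias from regularization) and an \emph{estimation error} (from replacing population operators by empirical ones), and then showing each tends to zero. Throughout I would write $A := \covxw\covw^{-1}\covwx$ and $v := \covxw\covw^{-1}\mathbf{b}$, so that $f^* = A^{-1}v$, and correspondingly define the regularized population quantity $f_\lambda := (A + \lambda_2\mathcal{I})^{-1}\covxw(\covw+\lambda_1\mathcal{I})^{-1}\mathbf{b}$ and the empirical estimator $\hat{f}_\lambda := (\hat{A}_{\lambda_1} + \lambda_2\mathcal{I})^{-1}\ecovxw(\ecovw+\lambda_1\mathcal{I})^{-1}\hat{\mathbf{b}}$, where $\hat{A}_{\lambda_1} := \ecovxw(\ecovw+\lambda_1\mathcal{I})^{-1}\ecovwx$. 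One checks that this operator expression agrees with the $\bvec = (\M\kmat + n\lambda_2\kmat)^{-1}\M\y$ formula from Proposition~\ref{prop:empirical} by the usual kernel-matrix/operator identities. Then I would bound
\begin{equation*}
\|\hat{f}_\lambda - f^*\|_{\hbspf} \le \|\hat{f}_\lambda - f_\lambda\|_{\hbspf} + \|f_\lambda - f^*\|_{\hbspf}.
\end{equation*}

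For the approximation term $\|f_\lambda - f^*\|_{\hbspf}$, I would first control $\|(\covw+\lambda_1\mathcal{I})^{-1}\mathbf{b} - \covw^{-1}\mathbf{b}\|_{\hbspu}$, which goes to $0$ as $\lambda_1 \to 0$ by the spectral theorem (it is $\lambda_1$ times a bounded-in-norm vector once we assume $\covw^{-1}\mathbf{b}$ exists, or more carefully a dominated-convergence argument on the spectral measure). Propagating through the bounded operator $\covxw$ and then through $(A+\lambda_2\mathcal{I})^{-1} \to A^{-1}$ (again valid since $A^{-1}$ is assumed to exist and have bounded norm), a triangle inequality gives $\|f_\lambda - f^*\|_{\hbspf}\to 0$ as $\lambda_1,\lambda_2\to 0$; this step only uses the stated invertibility/boundedness assumptions and continuity of the resolvent.

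For the estimation term $\|\hat{f}_\lambda - f_\lambda\|_{\hbspf}$, the key inputs are the standard $\sqrt{n}$-type concentration bounds for empirical covariance operators in Hilbert--Schmidt norm: $\|\ecovw - \covw\|_{\mathrm{HS}} = O_p(n^{-1/2})$, $\|\ecovxw - \covxw\|_{\mathrm{HS}} = O_p(n^{-1/2})$, and $\|\hat{\mathbf{b}} - \mathbf{b}\|_{\hbspu} = O_p(n^{-1/2})$ (using boundedness of $Y$ and of the kernels, via Bernstein-type inequalities in Hilbert space). One then uses the resolvent identity $(\ecovw+\lambda_1\mathcal{I})^{-1} - (\covw+\lambda_1\mathcal{I})^{-1} = (\ecovw+\lambda_1\mathcal{I})^{-1}(\covw - \ecovw)(\covw+\lambda_1\mathcal{I})^{-1}$, together with $\|(\covw+\lambda_1\mathcal{I})^{-1}\|\le \lambda_1^{-1}$, to show each building block is perturbed by at most $O_p(\lambda_1^{-1}n^{-1/2})$, and similarly the outer inverse $(\hat{A}_{\lambda_1}+\lambda_2\mathcal{I})^{-1}$ contributes a factor $\lambda_2^{-1}$. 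Collecting terms, $\|\hat{f}_\lambda - f_\lambda\|_{\hbspf} = O_p\!\big(\mathrm{poly}(\lambda_1^{-1},\lambda_2^{-1})\, n^{-1/2}\big)$, which tends to $0$ provided $\lambda_1,\lambda_2$ decay sufficiently slowly relative to $n^{-1/2}$ — this is exactly the ``sufficiently slow decay'' hypothesis in the statement. Combining the two bounds and choosing such a schedule (e.g. $\lambda_1,\lambda_2 \to 0$ with $\lambda_1^{a}\lambda_2^{b}\sqrt{n}\to\infty$ for the appropriate small powers $a,b$) gives $\|\hat{f}_\lambda - f^*\|_{\hbspf}\to 0$ in probability.

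The main obstacle is the bookkeeping in the estimation term: the estimator is a nested composition of two regularized inverses, so a naive perturbation bound accumulates several factors of $\lambda_1^{-1}$ and $\lambda_2^{-1}$, and one must be careful that the empirical operator $\hat{A}_{\lambda_1}$ is itself built from the perturbed $\ecovw$ (so there is a $\lambda_1^{-1}$ hidden inside the $\lambda_2^{-1}$ outer bound). Keeping the dependence on $(\lambda_1,\lambda_2)$ explicit and polynomial — rather than losing control of it — is what pins down precisely how slowly the regularizers may decay; without the assumed lower bound on $\|A^{-1}\|$ one would also have to track how fast $A+\lambda_2\mathcal{I}$ becomes ill-conditioned, but that assumption lets us treat the outer inverse as uniformly bounded and isolates the difficulty to the elementary resolvent perturbations above.
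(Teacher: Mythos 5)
Your proposal is correct and follows essentially the same route as the paper's proof: a triangle-inequality split into a regularization-bias term (shown to be $\mathcal{O}(\lambda_1)+\mathcal{O}(\lambda_2)$ using the resolvent identity $B^{-1}-A^{-1}=B^{-1}(A-B)A^{-1}$ and the assumed boundedness of $\covw^{-1}$ and $(\covxw\covw^{-1}\covwx)^{-1}$) plus an estimation term controlled by the $\sqrt{n}$-consistency of the empirical covariance and cross-covariance operators, with explicit polynomial factors of $\lambda_1^{-1},\lambda_2^{-1}$ determining the admissible decay schedule. The only difference is cosmetic bookkeeping in how the intermediate regularized population quantity is defined; the paper likewise concludes by requiring $1/(\lambda_1^2\lambda_2^2\sqrt{n})\to 0$.
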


The proof of this theorem can be found in Appendix \ref{sec:consistency-proof}.

The critical drawback of Theorem \ref{thm:consistency} is that it assumes the existence of $\covw^{-1}$ and $(\covxw \covw^{-1}\covwx)^{-1}$ which may not hold in general.
Similar assumption was also made in \citet{Fukumizu13:KBR} who provided counterexamples of cases in which such an assumption does not hold; see, also,  \citep{Song10:KCOND,Fukumizu13:KBR,Muandet17:KME} for the detailed discussion.
One potential direction for future work is thus to provide the consistency result of DualIV under weaker assumptions.

\section{Proofs}
\label{sec:proofs}

This section contains the detailed proofs.

\subsection{Proof of Proposition \ref{prop:saddle-solution}}
\label{sec:proof-saddle-sol}

\saddlepoint*

\begin{proof}
    Taking $\ell$ in \eqref{eq:erm} to be $\frac{1}{2}(y - y')^2$, plugging $u^*(y,z) = \ep_{X|z}[f(X)] - y$ into \eqref{eq:1sls-obj} yields
    \begin{align*}
    \Psi(f,u^*)
        &= \ep_{\mathit{XYZ}}[(f(X) - Y)u^*(Y,Z)] - \frac{1}{2}\ep_{\mathit{YZ}}[u^*(Y,Z)^2] \\
        &= \ep_{\mathit{XYZ}}[(f(X) - Y)(\ep_{X|Z}[f(X)] - Y)]
        - \frac{1}{2}\ep_{\mathit{YZ}}[(\ep_{X|Z}[f(X)] - Y)^2] \\
        &= \ep_{\mathit{YZ}}[(\ep_{X|Z}[f(X)] - Y)(\ep_{X|Z}[f(X)] - Y)] 
        - \frac{1}{2}\ep_{\mathit{YZ}}[(\ep_{X|Z}[f(X)] - Y)^2] \\
        &= \ep_{\mathit{YZ}}[(\ep_{X|Z}[f(X)] - Y)^2] 
        - \frac{1}{2}\ep_{\mathit{YZ}}[(\ep_{X|Z}[f(X)] - Y)^2] \\
        &= \frac{1}{2}\ep_{\mathit{YZ}}[(\ep_{X|Z}[f(X)] - Y)^2] = R(f),
    \end{align*}
    as required.
\end{proof}

\subsection{Proof of Lemma \ref{lem:representer}}
\label{sec:representer}

\representer*

\begin{proof}
Given a fixed sample $(x_i,y_i,z_i)_{i=1}^n$ of size $n$, any RKHSes $\hbspf$ and $\hbspu$ can be decomposed as $\hbspf = \hbspf_{\bvec}\oplus\hbspf_{\perp}$ and $\hbspu = \hbspu_{\avec}\oplus\hbspu_{\perp}$ where $\hbspf_{\bvec}$ and $\hbspu_{\avec}$ are respectively subspaces consisting of functions of the following forms:
    \begin{equation*}
        f_{\bvec} = \sum_{i=1}^n\beta_i k(x_i,\cdot), \quad u_{\avec} = \sum_{i=1}^n\alpha_i l(w_i,\cdot),
    \end{equation*}
    \noindent for some $\bvec\in\rr^n$ and $\avec\in\rr^n$. 
    The orthogonal subspaces $\hbspf_{\perp}$ and $\hbspu_{\perp}$ consist of elements which are orthogonal to $\hbspf_{\bvec}$ and $\hbspu_{\avec}$, respectively, \ie, for any $f_{\bvec}\in\hbspf_{\bvec}$, $f_{\perp}\in\hbspf_{\perp}$, $u_{\avec}\in\hbspu_{\avec}$, $u_{\perp}\in\hbspu_{\perp}$, we have
    $\langle f_{\bvec},f_{\perp}\rangle_{\hbspf} = 0$ and $\langle u_{\avec},u_{\perp}\rangle_{\hbspu} = 0$.
    Any elements $f\in\hbspf$ and $u\in\hbspu$ can thus be expressed as $f = f_{\bvec}+f_{\perp}$ and $u = u_{\avec} + u_{\perp}$ where $f_{\bvec}\in\hbspf_{\bvec}$, $f_{\perp}\in\hbspf_{\perp}$, $u_{\avec}\in\hbspu_{\avec}$, and $u_{\perp}\in\hbspu_{\perp}$.
    
    Next, recall that 
    $$\widehat{\Psi}(f,u) = \langle f, \ecovxw u\rangle_{\hbspf} - \langle u,\hat{\mathbf{b}}\rangle_{\hbspu} - \frac{1}{2}\langle u,\ecovw u \rangle_{\hbspu}$$ 
    where $\ecovxw = n^{-1}\Phi\Upsilon^\top$, $\ecovw = n^{-1}\Upsilon\Upsilon^{\top}$, $\hat{\mathbf{b}} = n^{-1}\Upsilon\mathbf{y}$, $\Phi = [k(x_1,\cdot),\ldots,k(x_n,\cdot)]$, $\Upsilon = [l(w_1,\cdot),\ldots,l(w_n,\cdot)]$, and $\mathbf{y} = [y_1,\ldots,y_n]^\top$. 
    Using the above decomposition, we have
    \begin{eqnarray*}
        \widehat{\Psi}(f,u) &=& \big\langle f, \ecovxw u \big\rangle_{\hbspf} - \big\langle u,\hat{\mathbf{b}}\big\rangle_{\hbspu} - \frac{1}{2}\big\langle u,\ecovw u \big\rangle_{\hbspu} \\
        &=& \big\langle f_{\bvec}+f_{\perp}, \sum_{i=1}^n\beta'_i k(x_i,\cdot) \big\rangle_{\hbspf} - \big\langle u,\hat{\mathbf{b}}\big\rangle_{\hbspu} - \frac{1}{2}\big\langle u,\ecovw u \big\rangle_{\hbspu} \\
        &=& \big\langle f_{\bvec}, \sum_{i=1}^n\beta'_i k(x_i,\cdot)\big\rangle_{\hbspf} - \big\langle u,\hat{\mathbf{b}}\big\rangle_{\hbspu} - \frac{1}{2}\big\langle u,\ecovw u \big\rangle_{\hbspu},
    \end{eqnarray*}
    \noindent where $\beta'_i := n^{-1}\langle l(w_i,\cdot),u\rangle_{\hbspu}$. Since the choice of $u$ is arbitrary, the minimizer of $\widehat{\Psi}(f,u)$ with respect to $f$ lives in the subspace $\hbspf_{\bvec}$.
    
    Similarly, we can write $\widehat{\Psi}(f,u)$ for any $f\in\hbspf$ as a function of $u\in\hbspu$ as 
    \begin{eqnarray*}
        \widehat{\Psi}(f,u) &=& \big\langle \ecovwx f, u \big\rangle_{\hbspu} - \big\langle u,\hat{\mathbf{b}}\big\rangle_{\hbspu} - \frac{1}{2}\big\langle u,\ecovw u \big\rangle_{\hbspu} \\
        &=& \big\langle \sum_{i=1}^n\alpha'_il(w_i,\cdot), u_{\avec}+u_{\perp}\big\rangle_{\hbspu} 
        - \big\langle u_{\avec}+u_{\perp},\sum_{i=1}^n\alpha_i''l(w_i,\cdot)\big\rangle_{\hbspu} \\
        && - \frac{1}{2}\big\langle u_{\avec}+ u_{\perp},\ecovw (u_{\avec}+u_{\perp}) \big\rangle_{\hbspu} \\
        &=& \big\langle \sum_{i=1}^n\alpha'_il(w_i,\cdot), u_{\avec}\big\rangle_{\hbspu} 
        - \big\langle u_{\avec},\sum_{i=1}^n\alpha_i''l(w_i,\cdot)\big\rangle_{\hbspu} 
        - \frac{1}{2}\big\langle u_{\avec},\ecovw u_{\avec}\big\rangle_{\hbspu}.
    \end{eqnarray*}
    The first equality follows from $\langle f, \ecovxw u\rangle_{\hbspf} = \langle \ecovwx f, u\rangle_{\hbspu}$ as $\ecovxw$ is an adjoint operator of $\ecovwx$. 
    Since the choice of $f$ is arbitrary, the maximizer of $\widehat{\Psi}(f,u)$ with respect to $u$ also lives in the subspace $\hbspu_{\avec}$.
    
    Consequently, $\widehat{\Psi}(f,u) = \widehat{\Psi}(f_{\bvec},u_{\avec})$ for some $\bvec\in\rr^n$ and $\avec\in\rr^n$. This completes the proof.
\end{proof}

\subsection{Proof of Proposition \ref{prop:empirical}}
\label{sec:empirical-estimate}

\empiricalestimate*

\begin{proof}
    It follows from \eqref{eq:close-form} that the structural function $f^*\in\hbspf$ satisfies
    $$(\covxw(\covw + \lambda_1\mathcal{I})^{-1}\covwx + \lambda_2\mathcal{I})f^* = \covxw(\covw + \lambda_1\mathcal{I})^{-1}\mathbf{b}.$$
    Replacing the population quantities with the empirical counterparts $\ecovxw = \frac{1}{n}\Phi\Upsilon^\top$, $\ecovw = \frac{1}{n}\Upsilon\Upsilon^\top$, and $\hat{\mathbf{b}} = \frac{1}{n}\Upsilon\y$ yields
    \begin{equation*}
     (\Phi\Upsilon^\top(\Upsilon\Upsilon^\top + n\lambda_1\mathcal{I})^{-1}\Upsilon\Phi^\top + n\lambda_2\mathcal{I})f^*
        = \Phi\Upsilon^\top(\Upsilon\Upsilon^\top + n\lambda_1\mathcal{I})^{-1}\Upsilon\y.
    \end{equation*}
    Using the identity $\Upsilon^\top(\Upsilon\Upsilon^\top + n\lambda_1\mathcal{I})^{-1} = (\Upsilon^\top\Upsilon + n\lambda_1 I)^{-1}\Upsilon^\top$, the above equation can be rewritten as
    \begin{eqnarray*}
         (\Phi(\Upsilon^\top\Upsilon + n\lambda_1I)^{-1}\Upsilon^\top\Upsilon\Phi^\top + n\lambda_2\mathcal{I})f^*
        &=& \Phi(\Upsilon^\top\Upsilon + n\lambda_1 I)^{-1}\Upsilon^\top\Upsilon\y \\
        (\Phi(\lmat + n\lambda_1I)^{-1}\lmat\Phi^\top + n\lambda_2\mathcal{I})f^*
        &=& \Phi(\lmat + n\lambda_1 I)^{-1}\lmat\y.
    \end{eqnarray*}
    By Lemma \ref{lem:representer}, $f^* = \Phi\bvec$ for some $\bvec\in\rr^n$. Substituting this back into the equation above yields
    \begin{eqnarray*}
        \Phi(\lmat + n\lambda_1I)^{-1}\lmat\Phi^\top\Phi\bvec + n\lambda_2\Phi\bvec
        &=& \Phi(\lmat + n\lambda_1 I)^{-1}\lmat\y \\
        \Phi(\lmat + n\lambda_1I)^{-1}\lmat\kmat\bvec + n\lambda_2\Phi\bvec
        &=& \Phi(\lmat + n\lambda_1 I)^{-1}\lmat\y.
    \end{eqnarray*}
    Multiplying both sides of the equation by $\Phi^\top$ gives
    \begin{eqnarray*}
        \Phi^\top\Phi(\lmat + n\lambda_1I)^{-1}\lmat\kmat\bvec + n\lambda_2\Phi^\top\Phi\bvec
        &=& \Phi^\top\Phi(\lmat + n\lambda_1 I)^{-1}\lmat\y \\
        \kmat(\lmat + n\lambda_1I)^{-1}\lmat\kmat\bvec + n\lambda_2\kmat\bvec
        &=& \kmat(\lmat + n\lambda_1 I)^{-1}\lmat\y \\
        (\kmat(\lmat + n\lambda_1I)^{-1}\lmat\kmat + n\lambda_2\kmat)\bvec
        &=& \kmat(\lmat + n\lambda_1 I)^{-1}\lmat\y.
    \end{eqnarray*}
    Setting $\M = \kmat(\lmat + n\lambda_1I)^{-1}\lmat$ yields the result.
\end{proof}

\subsection{Proof of Theorem \ref{thm:consistency}}
\label{sec:consistency-proof}

\consistency*

For ease of understanding, we will use the following notation throughout the proof:
\begin{equation*}
\begin{aligned}
     &\covr &:=& \; \covxw\covw^{-1}\covwx  \\
     &\covrl &:=& \; \covxw(\covw + \lambda_1\mathcal{I})^{-1}\covwx  \\
     &\covl &:=& \; \covw + \lambda_1\mathcal{I} 
\end{aligned}
\hspace{3em}
\begin{aligned}
    &\ecovr &:=& \; \ecovxw\ecovw^{-1}\ecovwx \\ 
    &\ecovrl &:=& \; \ecovxw(\ecovw + \lambda_1\mathcal{I})^{-1}\ecovwx \\
    &\ecovl &:=& \; \ecovw + \lambda_1\mathcal{I} .
\end{aligned}
\end{equation*}

The following identity will be used heavily in our proof:
\begin{equation}\label{eq:inv_diff}    
(B^{-1} - A^{-1}) = B^{-1}(A-B)A^{-1}. 
\end{equation}

\begin{proof}
First, it follows from the assumption that $\|\covw^{-1} \|_{op} \leq \delta_1^{-1}$ and $\|(\covxw \covw^{-1}\covwx^\top)^{-1} \|_{op} \leq \delta_2^{-1}$ for some $\delta_1,\delta_2 > 0$. 
Moreover, we can write the empirical estimate as 
$$\hat{f}_{\lambda} = (\ecovrl +\lambda_2 \Id)^{-1} \ecovxw\ecovl^{-1}\bh .$$
Similarly, under our assumption, the true population function can be expressed as 
$$f^* = \covr^{-1} \covxw \covw^{-1} \bb.$$
The goal is then to bound the difference of $\hat f_{\lambda}$ and $f^*$ in RKHS norm, \ie,
\begin{eqnarray}\label{eq:rkhs-bound}
  \left\| \hat f_{\lambda} - f^* \right\|_{_{\hbspf}} &=&
    \left\| (\ecovrl +\lambda_2 \Id)^{-1} \ecovxw\ecovl^{-1}\bh - \covr^{-1} \covxw \covw^{-1} \bb \right\|_{\hbspf} \nonumber \\
    & \leq & \left\| (\ecovrl +\lambda_2 \Id)^{-1} \ecovxw\ecovl^{-1}\bh - (\covr +\lambda_2 I)^{-1}  \covxw \covl^{-1}\bb \right\|_{\hbspf}  \nonumber \\
    && + \left\| (\covrl +\lambda_2 \Id)^{-1} \covxw \covl^{-1} \bb -  \covr^{-1}\covxw \covw^{-1} \bb \right\|_{\hbspf} \nonumber \\
    &=:& T_1 + T_2.
\end{eqnarray}

\paragraph{Bounding $T_2$:}
Let us first consider the second term $T_2$ in \eqref{eq:rkhs-bound}: 
\begin{eqnarray}\label{eq:T2}
       T_2 & := & \left\| (\covrl +\lambda_2 \Id)^{-1} \covxw \covl^{-1} \bb - 
    \covr^{-1} \covxw \covw^{-1} \bb \right\|_{\hbspf} \nonumber \\
    &\leq&  \left\| (\covrl +\lambda_2 \Id)^{-1} \covxw \covl^{-1} \bb - 
    \covrl^{-1} \covxw \covl^{-1} \bb \right\|_{\hbspf} \nonumber \\
    && + \left\|\covrl^{-1} \covxw \covl^{-1} \bb - \covr^{-1} \covxw \covw^{-1} \bb \right\|_{\hbspf} \nonumber \\
    &=:& T_{21} + T_{22} .
\end{eqnarray}

\paragraph{Bounding $T_{21}$:}
Let us consider $T_{21}$ in \eqref{eq:T2} first. 
\begin{eqnarray}\label{eq:T21}
    T_{21} &=& \left\| (\covrl+\lambda_2 \Id)^{-1} \covxw \covl^{-1} \bb - \covrl^{-1} \covxw \covl^{-1} \bb \right\|_{\hbspf} \nonumber \\
      &=& \lambda_2 \left\| (\covrl +\lambda_2 \Id)^{-1} \covrl^{-1} \covxw \covl^{-1} \bb \right\|_{\hbspf} \nonumber \\
      &\leq& \lambda_2 \left\| (\covrl + \lambda_2 \Id)^{-1} \right\|_{op} \left\|\covrl^{-1} \right\|_{op} \left\|\covxw\covl^{-1} \bb \right\|_{\hbspf} \nonumber \\
      &\leq& \lambda_2 \left\| (\covrl + \lambda_2 \Id)^{-1} (\covxw \covl^{-1} \covwx ) \right\|_{op} \left\|\covrl^{-1}\right\|_{op}^2 \left\|\covxw \covl^{-1} \bb \right\|_{\hbspf} \nonumber \\
      &\leq& \lambda_2 \left\|\covrl^{-1} \right\|_{op}^2 \left\|\covxw \covl^{-1} \bb \right\|_{\hbspf} \nonumber \\
      &\leq& \frac{\lambda_2}{\delta_2^2}\left\|\covxw \covl^{-1} \bb \right\|_{\hbspf} \leq \frac{\lambda_2}{\delta_1 \delta_2^2} \left\|\covxw \|_{op} \right\|\bb \|_{\hbspf}.
\end{eqnarray}
The second equality in \eqref{eq:T21} follows from the identity in \eqref{eq:inv_diff}. 
Hence, we have $T_{21} = \mathcal{O}(\lambda_2)$. 
From the above argument, it is also clear that there exists a positive constant $C$ such that $\max (\|\covxw \covw^{-1} \bb \|_{\hbspf}, \| \covxw\covl^{-1} \bb \|_{\hbspf}) \leq C$. 

\paragraph{Bounding $T_{22}$:}
Let us now consider the term $T_{22}$ in \eqref{eq:T2}.
\begin{equation}
    \begin{aligned}
        T_{22} &= \left\| \covrl^{-1}\covxw \covl^{-1} \bb -  \covr^{-1} \covxw \covw^{-1} \bb \right\|_{\hbspf} \\
        & \leq \left\|  \covrl^{-1} \covxw \covl^{-1} \bb - \covr^{-1}\covxw \covl^{-1} \bb \right\|_{\hbspf} 
        + \left\| \covr^{-1} \covxw \covl^{-1} \bb - \covr^{-1} \covxw \covw^{-1} \bb \right\|_{\hbspf}  \\
        &= \left\| ( \covrl^{-1} - \covr^{-1} ) \covxw \covl^{-1} \bb \right\|_{\hbspf}
         + \left\|\covr^{-1} \big(\covxw \covl^{-1} \bb - \covxw \covw^{-1} \bb \big)  \right\|_{\hbspf} \\
        & \leq C \left\|\covrl^{-1} \right\|_{op} \left\|\covr^{-1} \right\|_{op}  \left\|\covrl - \covr \right\|_{op}
         + \left\|\covr^{-1}\right\|_{op} \left\|\covxw \big(\covl^{-1} - \covw^{-1}\big) \bb \right\|_{\hbspf} \\
        & \leq C \left\|\covrl^{-1} \right\|_{op}\left\|\covr^{-1} \right\|_{op} \left\|\covxw (\covl^{-1} - \covw^{-1} ) \covwx \right\|_{op} \\
        & \quad + \left\|\covr^{-1} \right\|_{op} \left\|\covxw \big(\covl^{-1} - \covw^{-1}\big) \bb  \right \|_{\hbspf} .
    \end{aligned}
\end{equation}

Further, we have
\begin{equation}
       T_{22} \leq  \frac{\lambda C}{\delta_2} \left\|\covrl^{-1} \right\|_{op}   \left\|\covxw \covl^{-1}  \covw^{-1} \covwx \right \|_{op}
         + \left\|\covr^{-1}\right\|_{op} \left\|\covxw \big(\covl^{-1} - \covw^{-1}\big) \bb  \right \|_{\hbspf} .
\end{equation}
Let us consider now the following term in the above inequality:
\begin{eqnarray*}
    \left\|\covrl^{-1} \right\|_{op} &\leq&  \left\|\covr^{-1} \right\|_{op} + \left\|\covrl^{-1}  -  \covr^{-1} \right\|_{op}  \\
    &\leq& \frac{1}{\delta_2} + \left\|\covrl^{-1}  -  \covr^{-1} \right\|_{op}  \\
    &\leq&  \frac{1}{\delta_2} + \left\|\covrl^{-1} \right\|_{op}  \left\|\covr^{-1} \right\|_{op}  \left\|\covrl - \covr \right\|_{op}  \\
    &=&  \frac{1}{\delta_2} + \left\|\covrl^{-1} \right\|_{op}  \left\|\covr^{-1} \right\|_{op}  \left\|\covxw (\covl^{-1} - \covw^{-1} ) \covwx \right\|_{op}  \\
    &=&  \frac{1}{\delta_2} + \lambda_1\left\|\covrl^{-1} \right\|_{op}  \left\|\covr^{-1} \right\|_{op} \left\|\covxw \covl^{-1} \covw^{-1} \covwx \right\|_{op} .
\end{eqnarray*}

Now, $\|\covxw \covl^{-1}  \covw^{-1} \covwx  \|_{op} \leq \tilde{c}/\delta_1^2$ and similarly $\|\covxw \big(\covl^{-1} - \covw^{-1}\big) \bb   \|_{\hbspf} \leq \hat{c}/\delta_1^2$ for some positive real numbers $\hat c$ and $\tilde c$. 
Hence,
\begin{align*}
    \begin{split}
        \left\|\covrl^{-1} \right\|_{op} \leq \frac{1}{\delta_2 }  + \frac{\lambda_1 \tilde c}{\delta_1^2 \delta_2}\left\|\covrl^{-1} \right\|_{op} 
        \quad \Rightarrow \quad 
        \left\|\covrl^{-1} \right\|_{op} \leq \frac{1/\delta_2}{1 - \frac{\lambda_1 \tilde c}{\delta_1^2 \delta_2}}.
    \end{split}
\end{align*}

Hence, if $\lambda_1 \rightarrow 0$ sufficiently fast, then 
\begin{align}
    T_{22} &\leq \frac{\lambda_1 \tilde C}{\delta_1^2\delta_2} 
\end{align}
for a positive real number $\tilde C$. This implies $T_{22}= \mathcal{O}(\lambda_1)$. 

\paragraph{Bounding $T_1$:}
We now consider the first term in \eqref{eq:rkhs-bound}. 
\begin{eqnarray}\label{eq:T1}
       T_1 &=&  \left\| (\ecovrl +\lambda_2 \Id)^{-1} \ecovxw\ecovl^{-1}\bh -  (\covrl +\lambda_2 \Id)^{-1} \covxw \covl^{-1} \bb \right\|_{\hbspf}  \nonumber \\
    &\leq& \left\| (\ecovrl +\lambda_2 \Id)^{-1} \ecovxw\ecovl^{-1}\bh -  (\covrl +\lambda_2 \Id)^{-1}  \ecovxw \ecovl^{-1} \bh \right\|_{\hbspf} \nonumber \\
    && + \left\| (\covrl +\lambda_2 \Id)^{-1} \ecovxw\ecovl^{-1}\bh - (\covrl +\lambda_2 \Id)^{-1} \covxw \covl^{-1} \bb \right\|_{\hbspf} \nonumber \\
    & =:& T_{11} + T_{12}.
\end{eqnarray}

\paragraph{Bounding $T_{11}$:}
Consider the first term in \eqref{eq:T1}:
\begin{align}\label{eq:T11}
    \begin{split}
       T_{11} & = \left\| (\ecovrl +\lambda_2 \Id)^{-1} \ecovxw\ecovl^{-1}\bh -  (\covrl +\lambda_2 \Id)^{-1}  \ecovxw \ecovl^{-1} \bh \right\|_{\hbspf}  \\
    & \leq \left\| \ecovxw \ecovl^{-1} \bh \right\|_{\hbspf} \left\| (\ecovrl +\lambda_2 \Id)^{-1} \right\|_{op}\left\| (\covrl +\lambda_2 \Id)^{-1} \right\|_{op} \left\| \ecovrl - \covrl \right\|_{op} \\
   &\leq \frac{C}{\lambda_1 \lambda_2^2}  \left\| \ecovrl- \covrl \right\|_{op}
    \end{split}
\end{align}
for some positive constant $C$. Next, we have
\begin{align}
    \begin{split}
    \MoveEqLeft \left\| \ecovxw \ecovl^{-1} \ecovwx - \covxw \covl^{-1} \covwx \right\|_{op} \\
      & \leq \left\| \ecovxw \ecovl^{-1} \ecovwx - \ecovxw \covl^{-1} \ecovwx \right\|_{op} + \left\| \ecovxw \covl^{-1} \ecovwx - \covxw \covl^{-1} \covwx \right\|_{op} \\
      & \leq \left\| \ecovxw (\ecovl^{-1} - \covl^{-1}) \ecovwx \right\|_{op}  + \left\| \ecovxw \covl^{-1} \ecovwx - \covxw \covl^{-1} \ecovwx \right\|_{op} \\
      & \qquad + \left\| \covxw \covl^{-1} \ecovwx - \covxw \covl^{-1} \covwx \right\|_{op} \\
      & \leq \left\| \ecovxw \ecovl^{-1}  (\ecovw - \covw)  \covl^{-1} \ecovwx  \right\|_{op} + \left\| (\ecovxw - \covxw) \covl^{-1} \ecovwx \right\|_{op} \\
      & \qquad + \left\| \covxw \covl^{-1} (\ecovwx-\covwx) \right\|_{op}.
    \end{split}
\end{align}
From the $\sqrt{n}$-consistency of covariance and cross-covariance operators \citep{Baker1973,Fukumizu04:DRS}, we have
\begin{equation*}
 \left\| \ecovxw \ecovl^{-1} \ecovwx - \covxw \covl^{-1} \covwx \right\|_{op} = \mathcal{O}\left(\frac{1}{\lambda_1 \sqrt{n}}\right).
\end{equation*}
Hence, if $(\lambda_1\lambda_2)^2$ converges to zero slower than $1/\sqrt{n}$, then $T_{11}$ converges to zero asymptotically. 

\paragraph{Bounding $T_{12}$:}
Let us now consider the second term in \eqref{eq:T1}:
\begin{align}
    \begin{split}
       T_{12}  &=  \left\| (\covrl +\lambda_2 \Id)^{-1} \ecovxw\ecovl^{-1}\bh -  (\covrl +\lambda_2 \Id)^{-1} \covxw \covl^{-1} \bb \right\|_{\hbspf}  \\
    & \leq \left\|(\covrl +\lambda_2 \Id)^{-1} \|_{op} \| \ecovxw\ecovl^{-1}\bh  - \covxw \covl^{-1} \bb \right\|_{\hbspf}  \\
    & \leq \frac{1}{\lambda_2} \left\| \ecovxw\ecovl^{-1}\bh  - \covxw \covl^{-1} \bb \right\|_{\hbspf} \\
    & \leq \frac{1}{\lambda_2} \left[  \left\| \ecovxw\ecovl^{-1}\bh - \covxw\ecovl^{-1}\bh \right\|_{\hbspf}  + \left\| \covxw\ecovl^{-1}\bh - \covxw \covl^{-1}\bh \right\|_{\hbspf} \right. \\
    & \quad\qquad \left. + \left\|\covxw \covl^{-1}\bh - \covxw \covl^{-1} \bb \right\|_{\hbspf} \right] \\
    & \leq \frac{1}{\lambda_2} \left[  \left\| (\ecovxw - \covxw)\ecovl^{-1}\bh  \right\|_{\hbspf}  + \left\| \covxw(\ecovl^{-1} - \covl^{-1})\bh \right\|_{\hbspf} \right. \\
   & \quad\qquad \left.+ \left\|\covxw \covl^{-1}\bh - \covxw \covl^{-1} \bb \right\|_{\hbspg} \right].
    \end{split}
\end{align}
By the $\sqrt{n}$-consistency of mean element and covariance operator in RKHS \citep{Baker1973,Fukumizu04:DRS}, we have that
$T_{12}= \mathcal{O}\left(\frac{1}{\lambda_1\lambda_2 \sqrt{n}}\right)$.
Moreover, it follows from what we have shown so far that
\begin{align}
\begin{split}
     \MoveEqLeft \left\| (\ecovxw\ecovl^{-1}\ecovwx +\lambda_2 \Id)^{-1} \ecovxw\ecovl^{-1}\bh - (\covxw \covw^{-1}\covwx)^{-1} \covxw \covw^{-1} \bb \right\|_{\hbspf} \\ 
    &\leq T_1 + T_2 \leq T_{11} + T_{12} + T_{13} + T_{14}.
\end{split}
\end{align}
Hence, if $\lambda_1$ and $\lambda_2$ converge to zero with the sample size $n$ such that $\frac{1}{\lambda_1^2\lambda_2^2\sqrt{n}}$ also converges to zero, then $\| \hat f_{\lambda} - f^*\|_{_{\hbspf}} \to 0$. 
That is, our estimator is consistent in RKHS norm.
\end{proof}

\end{document}